\def\eqref#1{equation~\ref{#1}}
\def\1{\bm{1}}
\DeclareMathAlphabet{\mathsfit}{\encodingdefault}{\sfdefault}{m}{sl}
\SetMathAlphabet{\mathsfit}{bold}{\encodingdefault}{\sfdefault}{bx}{n}
\theoremstyle{plain}
\newtheorem{theorem}{Theorem}[section]
\newtheorem{lemma}[theorem]{Lemma}
\newtheorem{corollary}[theorem]{Corollary}
\theoremstyle{definition}
\newtheorem{assumption}[theorem]{Assumption}
\theoremstyle{remark}
\title{Calibration Matters: Tackling Maximization Bias in Large-Scale Advertising Recommendation Systems}
\newtheorem{example}{Example}
\author{
~~Yewen Fan$^{*, 1}$,~~Nian Si$^{*, 3}$,~~Kun Zhang$^{1, 2}$\\
$^1$ Carnegie Mellon University\\
$^2$ Mohamed bin Zayed University of Artificial Intelligence\\
$^3$ University of Chicago Booth School of Business\\
($*$ Equal contribution)
}
\begin{document}

\maketitle

\begin{abstract}
Calibration is defined as the ratio of the average predicted click rate to the true click rate. The optimization of calibration is essential to many online advertising recommendation systems because it directly affects the downstream bids in ads auctions and the amount of money charged to advertisers. Despite its importance, calibration often suffers from a problem called “maximization bias”. Maximization bias refers to the phenomenon that the maximum of predicted values overestimates the true maximum. The problem is introduced because the calibration is computed on the set selected by the prediction model itself. It persists even if unbiased predictions are achieved on every datapoint and worsens when covariate shifts exist between the training and test sets. To mitigate this problem, we quantify maximization bias and propose a variance-adjusting debiasing (VAD) meta-algorithm in this paper. The algorithm is efficient, robust, and practical as it is able to mitigate maximization bias problem under covariate shifts, without incurring additional online serving costs or compromising the ranking performance. We demonstrate the effectiveness of the proposed algorithm  using a state-of-the-art recommendation neural network model on a large-scale real-world dataset.
\end{abstract}

\section{Introduction}

The online advertising industry has grown exponentially in the past few decades.
% The online advertising industry has grown exponentially in the past few decades as a significant branch of the booming digital economy. 
 According to \citet{adsforcast2}, the total value of the global internet advertising market was worth USD 566 billion in 2020 and is expected to reach USD 700 billion by 2025. 
%A look into the financial reports of Meta reveals that the company has made approximately 97\% of its total revenue by selling the online advertising space in 2021 \footnote{https://d18rn0p25nwr6d.cloudfront.net/CIK-0001326801/0eeab029-1733-4296-acf8-fe5823d68872.pdf}. 

In the online advertising industry, to help advertisers reach the target customers, demand-side platforms (DSPs) try to bid for available ad slots in an ad exchange. A DSP serves many advertisers simultaneously, and ads provided by those advertisers form the DSP's ads candidate pool. 
From the DSP’s perspective, the advertising campaign pipeline executes as follows:

(1) The DSP uses data to build machine learning (ML) models for advertisement value estimation. An advertisement's value is often measured by the click-through rate (CTR) or conversion rate.

(2)  When the ad exchange sends  requests in the form of  online bidding auctions for some specific ad slots to a DSP, the DSP uses the ML models to predict values for ads in its ads candidate pool.

(3) For the bidding requests, 
the DSP needs to choose the most suitable ads from its ads candidate pool.
Therefore, based on the estimated values, the DSP chooses the ad candidates with the highest values and submits corresponding bids to the ad auctions in the ad exchange. 

% (4) For each auction, suppose such a bid is the highest bid among all of participating DSPs. In that case, the ad candidate that the DSP selected will be displayed (i.e., recommended) in this specific ad slot, and the exchange would charge the winning DSP a certain amount based on the submitted bid and the auction mechanism. Otherwise, the DSP loses this display opportunity and does not need to pay.

(4) For each auction, an ad with the highest bid would win the auction, and would be displayed (i.e., recommended) in this specific ad slot. The ad exchange would charge the winning DSP a certain amount of money based on the submitted bid and the auction mechanism.

For the machine learning models in Step (2), besides learning the ranking (i.e. which ads sent to ad exchange), DSPs also need to accurately estimate the value of the chosen ads, because in Step (3), DSPs bid based on the estimated value obtained from Step (2). Thus, DSPs try to avoid underbidding or overbidding, the latter of which may result in over-charging advertisers. We measure the estimation accuracy by \textit{calibration}, which is the ratio of the average estimated value (e.g., estimated click-through rate) to the average empirical value (e.g., whether user click or not).

Calibration is essential to the success of online ads bidding methods, as well-calibrated predictions are critical to the efficiency of ads auctions \citep{he2014practical,mcmahan2013ad}. Calibration is also crucial in applications such as weather forecasting~\citep{murphy1977reliability,degroot1983comparison,gneiting2005weather}, personalized medicine \citep{jiang2012calibrating} and natural language processing \citep{nguyen2015posterior,card2018importance}. There is rich literature for model calibration methods \citep{zadrozny2002transforming, zadrozny2001obtaining,menon2012predicting,deng2020calibrating,naeini2015obtaining,kumar2019verified,platt1999probabilistic,guo2017calibration,kull2017beta,kull2019beyond}. These existing methods focus on calibration for model bias. However, those methods do not explicitly  consider the selection procedures in Step (3) of the aforementioned recommendation system pipeline. In this case, even if unbiased predictions are obtained for each ad, the calibration may perform poorly on the selection set due to \textit{maximization bias}.  Maximization bias occurs when maximization is performed on random estimated values rather than deterministic true values. We will provide a concrete example to illustrate the difference between maximization bias and model bias.

\vskip 5pt

\begin{example}
\label{example1}
Assume there are two different ads with the  same “true” CTR 0.5. Now we consider  an ML model that learns the CTR of the two ads from data independently. We assume that the ML model predicts the CTR of either one of the ads as 0.6 or 0.4 with equal probabilities. Note that the estimation is unbiased and thus having zero model bias. After both advertisements are submitted to the auction system, the ad with the highest estimated CTR will be selected. In this case, the probability of the system selecting an ad with an estimated CTR 0.6  is 75\% and an ad with an estimated CTR 0.4 is 25\%. Therefore, in this example, the model has maximization bias because it overestimates the true value of the ads ($ 3/4 \times 0.6  +1/4 \times  0.4 = 0.55 > 0.5$). This example explains why there may be maximization bias in a model after selection even if the model has zero model bias.
\end{example}

Hypothetically, if the DSP submits all the ads with their corresponding bids to an ad exchange, the maximization bias is analogous to the so-called winner's curse, even in the absence of selection and maximization procedures during Step (3). In auction theory, the winner's curse means that  in common value auctions, the winners tend to overbid if they receive  noisy private signals. Consequently, this calibration issue arises in a wider context. 

What makes calibration  even harder is the covariate shifts between training and test data \citep{shen2021towards,wang2021generalizing}. The training data only consists of the previous winning and displayed ads, but during testing, DSPs need to select from a much larger ads candidate set. Therefore, the test set contains many ads that are underrepresented in the training set since those types of ads have never been recommended before. These covariate shifts will invalidate aforementioned calibration methods that reduce bias using labeled validation sets.

In this paper, we propose a practical meta-algorithm to tackle the maximization bias in calibration, which could be in tandem with other calibration methods. Our algorithm neither compromises the ranking performance nor increases online serving overhead (e.g., inference cost and memory cost). Our contributions are summarized below:

    (1) We theoretically quantify the maximization bias  in  generalized linear models with Gaussian distributions.
    We show that the calibration error mainly depends on the variances of the predictor and the test distribution rather than number of items selected.

    (2) We propose an efficient, robust, and practical meta-algorithm called variance-adjusting debias (VAD) method\footnote{code available in https://anonymous.4open.science/r/VAD} that can apply to any machine learning method and any existing calibration methods. This algorithm is mostly executed offline without any additional online serving costs. Furthermore, the algorithm is robust to covariate shifts that are common in modern recommendation systems.

    (3) We conduct extensive numerical experiments to demonstrate the effectiveness of the proposed meta-algorithm in both synthetic datasets using a logistic regression model and a large-scale real-world dataset using a state-of-the-art recommendation neural network. In particular,  applying VAD in tandem with other calibration methods always improve the calibration performance compared with applying other calibration methods alone.

% The remainder of the paper is structured as follows. Section 2 discusses related work. In Section 3, we formulate our problem and illustrate the differences between maximization bias and model bias in detail. In Section 4, we provide a theory on the quantification of the maximization bias in
% generalized linear models.
% Section 5 details our proposed algorithm.
% Numerical experiments on both synthetic datasets and a large-scale real-world dataset presented in
% Section 6 aim to demonstrate the efficacy of our meta-algorithm. Finally, we discuss industrial implementations and conclude with future work in Section 7. All
% technical proofs are provided in Appendix \ref{appendx:proofs}.

\section{Related Work}
There is a long line of work regarding calibration methods. Broadly speaking, existing methods could be classified into two groups: non-parametric and parametric methods.\citep{kweon2021obtaining} On one hand, non-parametric methods utilizes binning ideas, which include histogram binning \citep{zadrozny2001obtaining}, isotonic regression  \citep{zadrozny2002transforming}, smoothed isotonic regression \citep{deng2020calibrating}, Bayesian binning \citep{naeini2015obtaining}, and scaling-binning calibrator \citep{kumar2019verified}. On the other hand, parametric methods explicitly learn a parametric function mapping from the model original scores to calibrated probabilities. Example methods include
 Platt scaling \citep{platt1999probabilistic}, temperature scaling \citep{guo2017calibration}, Beta calibration \citep{kull2017beta}, and Dirichlet calibration \citep{kull2019beyond}. We refer the readers to \citet{kweon2021obtaining} for a comprehensive survey on various calibration methods. As discussed in Introduction, those methods are not designed for correcting maximization bias.

 Maximization bias appears in many different domains, ranging from economics \citep{van2004rational,capen1971competitive}, decision analysis \citep{smith2006optimizer}, to statistics, which includes  model selection \citep{varma2006bias}, over-fitting \citep{cawley2010over}, selection bias \citep{heckman1979sample}, and feature selection \citep{ambroise2002selection}. Maximization bias is especially well-documented in  reinforcement learning literature (see, \citet[Section 6.7]{sutton2018reinforcement}). In reinforcement learning, estimating value function is a fundamental task, where value function is typically the maximum of many expected values of different actions. To reduce maximization bias, double learning or cross-validation estimators \citep{hasselt2010double,van2011insights,van2013estimating,van2016deep} are used and demonstrate strong empirical performances. The basic idea is to train two separate models: one model selects while the other predicts the probability.
However, this type of methods is not applicable to large-scale ads recommendation systems since it will double online serving costs, and online serving efficiency is essential to recommendation system performances.

Calibration for maximization bias is also closely related to estimating the maximum mean of several random variables in operations research and machine learning. Various estimators were proposed: \citet{chen1976procedures} develop a two-stage procedure to provide a confidence interval of the highest mean; \citet{lesnevski2007simulation} further integrate their method  with screening ideas, variance-reduction, and common random numbers techniques; \citet{liu2019upper} propose an  upper confidence bound (UCB) approach; \citet{chang2005adaptive} incorporate similar UCB components into the Monte Carlo tree search; and \citet{d2016estimating} use weighted
average of the sample means, where  the weights
are computed by Gaussian approximations. However, those methods  are either simulation-based or have access to multiple i.i.d. copies of random variables; thus, they cannot be directly applied to supervised learning settings in recommendation systems.

\section{Preliminaries and Problem Setting}
\label{sec:preliminaries}

Consider a supervised learning setting. For each data point, we have a high-dimensional feature $X\in \mathcal{X} \subset \mathbb{R}^d$ and a label $Y \in \mathcal{Y}\triangleq\{0,1\}$. In this paper, we focus on the binary label $Y$ that represents whether the user clicks or not.
Our method can be easily extended to continuous labels. Suppose we have access to a labeled training set $(X,Y) \sim \mathcal{D}_{\mathrm{train}}$ and an unlabeled test validation set $X\sim \mathcal{D}_{\mathrm{val-test},X}$, which has the same
distribution as the $X$ margin of the  real test set  $\mathcal{D}_{\mathrm{test}}$.
Note that there are often covariate shifts between $\mathcal{D}_{\mathrm{train}}$ and  $\mathcal{D}_{\mathrm{test}}$ since the training set consists of the historical recommended items, while the test set contains all possible candidates, many of which may have never been seen by users before. However, we can reasonably assume that there are no concept drifts between training and test distributions in Assumption \ref{assump:no_concept_drift}.
\begin{assumption}[No concept drift]
\label{assump:no_concept_drift}
The conditional distribution $Y$ given $X$ is the same between the training distribution and the test distribution.
\end{assumption}

Then, the recommendation system pipeline is summarized  in Flowchart \ref{fig:flow_chart}. Specifically, in the first step, the predictor $f:  \mathcal{X}  \rightarrow [0,1]$ is a prediction of $\mathbb{P}(Y=1|X)$; in the second step, we rank all items by $f(x)$ and pick the top $\alpha$ (unknown apriori) proportion, where the selected set is denoted by  $\mathcal{\tilde{D}}^\alpha_{\mathrm{test}}$; in the third step, we consider two different calibration metrics, including calibration error $\mathcal{E}$ \citet{he2014practical} and expected calibration error (ECE) and we provide additional results for maximum calibration error (MCE) in the Appendix \citep{naeini2015obtaining, kweon2021obtaining}.

The calibration error $\mathcal{E}$ and ECE are defined by
    \begin{equation}
    \mathcal{E}  \triangleq \frac{ \sum_{i \in  \mathcal{\tilde{D}}^{\alpha}_{\mathrm{test}}}f(x_i)} {\sum_{i \in \mathcal{\tilde{D}}^{\alpha}_{\mathrm{test}}} y_i} - 1, \text{ }    \mathrm{ ECE} \triangleq \sum_{m=1}^{M} \frac{|B_m|}{N} \left| \frac{\sum_{k \in B_m} y_k }{|B_m|} - \frac{\sum_{k \in B_m}f(x_k)  }{|B_m|} \right|
        \end{equation}
where we partition all items in  $\mathcal{\tilde{D}}^\alpha_{\mathrm{test}}$ into $M$ equi-spaced bins by their values, and $B_m$ is $m$-th bin and $N=|\mathcal{\tilde{D}}^\alpha_{\mathrm{test}}|$ is the number of samples.

\begin{figure}[ht]
\center
    \includegraphics[width = 11.cm]{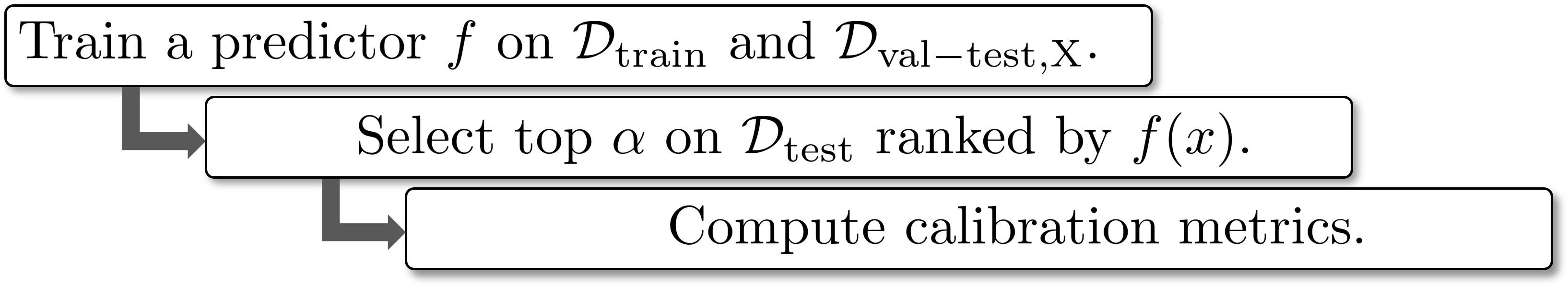}

    \caption{Flowchart visualization of the procedure.}
     \vskip -0.3cm

    \label{fig:flow_chart}
\end{figure}

 In Section \ref{sec:real}, we execute the pipeline on a real-world ads recommendation system dataset using a state-of-the-art neural network and observe that calibration errors are consistently larger than 3\% as shown in Table \ref{tab:out-of-distribution-kaggle}, if not performing any debiasing methods.
Therefore, the goal of this paper is to find a predictor $f$ that minimizes calibration errors in the selected set without compromising the ranking performance or incurring additional online serving costs. The serving costs refer to the costs of executing the method on the test sets.

We note that the unlabeled validation set $\mathcal{D}_{\mathrm{val-test},X}$ is easily obtained and offline available because we can use the  candidate sets from previous online requests. Furthermore, $\alpha$ is usually  small (but unknown) in practice since DSPs usually have a large ads candidate set.

\textbf{Remark: maximization bias v.s. model bias} In this paper, we tackle maximization bias, which is different from model bias in machine learning. Model bias is the difference between the expected model prediction and true value for a given ad. Note that model predictions have randomness due to random data and stochastic optimization algorithms.
However, maximization bias in our paper is a different type of bias, orthogonal to model bias. Maximization bias exists because of the selection (maximization) step. Even models with zero model bias may have maximization bias.
We refer the readers back to Example \ref{example1} to see the difference between model bias and maximization bias.

\section{Quantifying  Maximization Bias in Generalized Linear Models}
Generalized linear models are a unifying framework of linear regression, logistic regression, and Poisson regression \citep{nelder1972generalized}. In particular, neural networks (NNs) are categorized as a generalized linear model if we view the neurons from  the second-to-last layer as  features and use a sigmoid function as the final activation function. In this section, we provide a rigorous quantification for the maximization bias in generalized linear models with Gaussian features.

\label{sec:generalize_linear}
By a slight abuse of notation, $\mathcal{D}_{\mathrm{test}}$ represents the population distribution of  $\{X,Y\}$ in the test set.
% testing samples in the form $\{\mathbf{X},\mathbf{Y}\}$%
%, where $\mathbf{X}$ is an $N \times d$ matrix and $\mathbf{Y}$ is an $N$
% dimensional vector.
We need to select the
top-$\alpha$ percent from the test set.
We further assume the underlying true model is generalized linear in both  training and test sets with the same conditional distributions, i.e.,
$
Y|X \sim \mathrm{Ber}\left(\phi(\beta _{\ast }^{\top }X\right)),
$
where $\phi(\cdot)$ is a positive, continuous differentiable, and monotonically increasing function.
Let $\hat{\beta}_N$ be the parameter learned from the $N$-sample training set sampled from $\mathcal{D}_{\mathrm{train}}$, which is independent of $\mathcal{D}_{\mathrm{test}}$. We do not need to assume marginal distributions of  $X$ in the training data $\mathcal{D}_{\mathrm{train}}$ and test data $\mathcal{D}_{\mathrm{test}}$ have
the same distribution.

We let $q_{1-\alpha }(Z)$ be the $1 - \alpha$ quantile of
distribution $Z$, i.e.,
$
\mathbb{P}(Z\geq q_{1-\alpha }(Z) ) =\alpha.
$

By the monotonicity of $\phi(\cdot)$, the estimated average probability on the selection set is
\begin{equation}
\mathbb{E}_{\mathcal{D}_{\mathrm{test}},\mathcal{D}_{\mathrm{train}}}\left[ \phi\left( \hat{\beta}_N^{\top }X \right)|\hat{\beta}_N^{\top }X\geq q_{1-\alpha }(\hat{%
\beta}^{\top }_N X|\hat{\beta}_N)\right];  \label{estimation}
\end{equation}%
the actual average probability on the selection set  is
\begin{equation}
\mathbb{E}_{\mathcal{D}_{\mathrm{test}},\mathcal{D}_{\mathrm{train}}}\left[ \phi\left(\beta _{\ast }^{\top }X\right)|\hat{\beta}_N^{\top }X\geq q_{1-\alpha }(%
\hat{\beta}_N^{\top }X|\hat{\beta}_N)\right].  \label{actual}
\end{equation}%
% and the calibration is the ratio of this two:
% \[
% \frac{\mathbb{E}_{\mathcal{D}_{\mathrm{test}}}\left[ \phi\left( \hat{\beta}^{\top }X \right)|\hat{\beta}^{\top }X\geq q_{1-\alpha }(\hat{%
% \beta}^{\top }X)\right]}{\mathbb{E}_{\mathcal{D}_{\mathrm{test}}}\left[ \phi\left(\beta _{\ast }^{\top }X\right)|\hat{\beta}^{\top }X\geq q_{1-\alpha }(%
% \hat{\beta}^{\top }X)\right]}
% \]
Note that these expectations  are taken with respect to  the randomness of both $X$ and  $\hat{\beta}_N$. To abbreviate the notion, we drop the subscripts $\mathcal{D}_{\mathrm{test}},\mathcal{D}_{\mathrm{train}}$  when there is no confusion.
We quantify the maximization bias for this generalized linear model with covariate shifts  in Theorem  \ref{thm:main}.
\begin{theorem}
\label{thm:main}
Suppose $X\sim \mathcal{N}\left(\mu, \Sigma\right)$ and $\phi(\cdot)$ is a positive, Lipschitz continuous, twice differentiable, and monotonically increasing function. If  $\hat{\beta}_N$ is a maximum likelihood estimator, we have
the estimated average probability on the selection set
\begin{align*}
&\mathbb{E}\left[ \phi \left( \hat{\beta}_{N}^{\top }X\right) |\hat{\beta}_N%
^{\top }X\geq q_{1-\alpha }(\hat{\beta}_{N}^{\top }X|\hat{\beta}_N)\right]
=%
\mathbb{E}\left[ \phi \left( \hat{\beta}_{N}^{\top }\mu +\sqrt{\hat{\beta}%
_{N}^{\top }\Sigma \hat{\beta}_{N}}Z\right) |Z\geq q_{1-\alpha }(Z)\right] ,
\end{align*}%
 and the maximization bias on the selection set is
\begin{align}
&\mathbb{E}\left[ \phi \left( \hat{\beta} _{N }^{\top }X\right) |\hat{\beta}%
_{N}^{\top }X\geq q_{1-\alpha }(\hat{\beta}_{N}^{\top }X|\hat{\beta}_N)\right]
-\mathbb{E}%
\left[ \phi \left( {\beta}_{*}^{\top }X\right) |\hat{\beta}^{\top }_N X\geq
q_{1-\alpha }(\hat{\beta}_{N}^{\top }X|\hat{\beta}_N)\right] \notag  \\
=&\mathbb{E}\left[ \int_{\frac{\hat{\beta}_{N}^{\top }\Sigma \beta _{\ast }%
}{\sqrt{\hat{\beta}_{N}^{\top }\Sigma \hat{\beta}_{N}}}}^{\sqrt{\hat{\beta}%
_{N}^{\top }\Sigma \hat{\beta}_{N}}}h^{\prime }(t)\mathrm{d}t\right]
+O\left( \frac{1}{N}\right) . \label{eq:max_bias}
\end{align}
where  $Z\sim \mathcal{N}(0,1)$ and
$
h^{\prime }(t)\triangleq\mathbb{E}\left[ \phi ^{\prime }\left( \beta _N^{\top
}\mu +tZ\right) Z|Z\geq q_{1-\alpha }(Z)\right]. \label{eq:h't}
$
The $O(1/N)$ term only depends on $\phi$, $\beta_*$, and $\Sigma$, but does not depend on $\alpha$.
\end{theorem}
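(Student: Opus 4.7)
The plan is to establish the first identity by a change of variables on the Gaussian $\hat\beta_N^\top X$, then decompose $\beta_*^\top X$ using the joint Gaussian structure of $(\hat\beta_N^\top X,\beta_*^\top X)$ given $\hat\beta_N$, express the estimated selected-set mean and an intermediate reference mean via a single scalar function $h$ so that their difference is exactly $\int_a^b h'(t)\,\mathrm{d}t$, and finally control the gap between the reference mean and the true mean using a Taylor expansion combined with standard MLE asymptotics.

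First, I would condition on $\hat\beta_N$. Because the test feature is independent of the training set and $X\sim\mathcal{N}(\mu,\Sigma)$, the scalar $\hat\beta_N^\top X$ is Gaussian with mean $\hat\beta_N^\top\mu$ and variance $\hat\beta_N^\top\Sigma\hat\beta_N$. Writing $\hat\beta_N^\top X=\hat\beta_N^\top\mu+\sqrt{\hat\beta_N^\top\Sigma\hat\beta_N}\,Z$ with $Z\sim\mathcal{N}(0,1)$, the selection event $\hat\beta_N^\top X\ge q_{1-\alpha}(\hat\beta_N^\top X\mid\hat\beta_N)$ becomes $Z\ge q_{1-\alpha}(Z)$, and the tower property yields the first display in the theorem. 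For the actual mean, I would exploit that $(\hat\beta_N^\top X,\beta_*^\top X)$ is bivariate normal given $\hat\beta_N$ with cross-covariance $\hat\beta_N^\top\Sigma\beta_*$, so reparametrising through the same $Z$ gives
\[
\beta_*^\top X=\beta_*^\top\mu+\frac{\hat\beta_N^\top\Sigma\beta_*}{\sqrt{\hat\beta_N^\top\Sigma\hat\beta_N}}\,Z+\sigma_\epsilon\tilde Z,
\]
where $\tilde Z\sim\mathcal{N}(0,1)$ is independent of $Z$ and $\sigma_\epsilon^2=\beta_*^\top\Sigma\beta_*-(\hat\beta_N^\top\Sigma\beta_*)^2/(\hat\beta_N^\top\Sigma\hat\beta_N)$. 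A short calculation (the equality case of Cauchy--Schwarz) shows that both $\sigma_\epsilon^2$ and its gradient vanish at $\hat\beta_N=\beta_*$, so $\sigma_\epsilon^2=O(\|\hat\beta_N-\beta_*\|^2)$.

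Next, introduce the scalar function
\[
h(t)\triangleq\mathbb{E}\!\left[\phi(\hat\beta_N^\top\mu+tZ)\,\middle|\,Z\ge q_{1-\alpha}(Z)\right],
\]
and set $a=\hat\beta_N^\top\Sigma\beta_*/\sqrt{\hat\beta_N^\top\Sigma\hat\beta_N}$ and $b=\sqrt{\hat\beta_N^\top\Sigma\hat\beta_N}$. The estimated mean (at fixed $\hat\beta_N$) is $h(b)$, and the fundamental theorem of calculus gives $h(b)-h(a)=\int_a^b h'(t)\,\mathrm{d}t$ with $h'(t)=\mathbb{E}[\phi'(\hat\beta_N^\top\mu+tZ)\,Z\mid Z\ge q_{1-\alpha}(Z)]$, which matches the integrand in the theorem. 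It remains to compare $h(a)$ with the actual mean $\mathbb{E}[\phi(\beta_*^\top\mu+aZ+\sigma_\epsilon\tilde Z)\mid Z\ge q_{1-\alpha}(Z)]$. A second-order Taylor expansion of $\phi$ around $\hat\beta_N^\top\mu+aZ$, combined with $\mathbb{E}[\tilde Z]=0$ and independence of $\tilde Z$ from $Z$, reduces this gap to a first-order term in $(\beta_*-\hat\beta_N)^\top\mu$ and quadratic terms controlled by $\|\hat\beta_N-\beta_*\|^2+\sigma_\epsilon^2$. Taking the outer expectation and using the standard MLE asymptotics $\mathbb{E}[\hat\beta_N-\beta_*]=O(1/N)$ and $\mathbb{E}[\|\hat\beta_N-\beta_*\|^2]=O(1/N)$, every piece collapses to $O(1/N)$; boundedness of $\phi'$ and $\phi''$ (from Lipschitz-and-twice-differentiability) keeps the hidden constants independent of $\alpha$.

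The main obstacle I expect is the fourth step: making the Taylor-remainder accounting rigorous enough that every lower-order-in-$\hat\beta_N-\beta_*$ contribution really collapses to $O(1/N)$ after the outer expectation, and doing so uniformly in $\alpha$ even though the conditional moments $\mathbb{E}[Z^k\mid Z\ge q_{1-\alpha}(Z)]$ diverge as $\alpha\to 0$. The Lipschitz hypothesis on $\phi$ is what disentangles these tail-truncated moments from the displacement factors, because it forces the $\phi'$- and $\phi''$-factors inside the residual to be bounded, so the $\alpha$-dependence is confined to the integral term $\int_a^b h'(t)\,\mathrm{d}t$ and the remainder constant depends only on $\phi$, $\beta_*$, and $\Sigma$ as claimed.
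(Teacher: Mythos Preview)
Your proposal is correct and follows essentially the same approach as the paper: the Gaussian reparametrization via $Z$, the bivariate-normal decomposition of $\beta_*^\top X$ into $aZ+\sigma_\epsilon\tilde Z$, the function $h$ together with the fundamental theorem of calculus, and Taylor expansions plus MLE asymptotics (the paper packages these as a preliminary Lemma~\ref{lma:MLE}) to absorb the remainder into $O(1/N)$. The only difference is organizational---you do a single Taylor expansion around $\hat\beta_N^\top\mu+aZ$ handling both the $\sigma_\epsilon\tilde Z$ and the $(\beta_*-\hat\beta_N)^\top\mu$ displacements at once, whereas the paper performs two successive expansions---and you are more explicit than the paper about why the $O(1/N)$ constant is $\alpha$-free.
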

\textbf{Remark:} 
1. The Gaussianity assumptions of the feature $X$ are not necessary. One only needs $\beta_*^\top X$ and $\hat{\beta}_N^\top X$ to be jointly Gaussian conditional on $\hat{\beta}_N$ and the proof would still  go through. Figure \ref{fig:hist} in Appendix \ref{appendix:real} shows that $\hat{\beta}_N^\top X$  is indeed very close to  the Gaussian distribution.

2. Note that we do not assume that  the training and test distributions are the same.  Theorem \ref{thm:main} is true under arbitrary covariate shifts.

% 1. $\hat{\beta}_N$ is not a maximum likelihood
% estimator. We need only $\hat{\beta}_N =N^{-1/2} Z_N+ O\left(1/N\right)$, where $\{Z_N\}$ is a sequence of mean zero and bounded variance random variables.

In the setting of Theorem \ref{thm:main},  $\mathrm{Var}(\hat{\beta}X|X)$ is heterogeneous across $X$, and we allow to choose any $\alpha$ proportion of the test set. Furthermore, if $\alpha$ is small and $\mu$ is small,   $h'(t)$ has the same order as $\mathbb{E}\left[  Z|Z\geq q_{1-\alpha }(Z)\right]$, which is large, and the maximization bias  mainly depends on ${\hat{\beta}_{N}^{\top }\Sigma \beta _{\ast }%
}$ and ${{\hat{\beta}%
_{N}^{\top }\Sigma \hat{\beta}_{N}}}$. The formal statement is in Lemma \ref{lma:same_order} in Appendix \ref{appendx:proofs}.

To reduce the maximization bias, we  discount $\hat{\beta}_N X$  by $\lambda$. Corollary \ref{cor:lambda} studies the bias for this discounted estimator.
\begin{corollary}
\label{cor:lambda}
Suppose the same assumptions in Theorem \ref{thm:main} are imposed. If we change  $\beta_N^\top X$ to  $\lambda \beta_N^\top X + (1-\lambda)\beta_N^\top\mu$ for $\lambda \in [0,1]$, the bias becomes
\begin{align}
&\mathbb{E}\left[ \phi \left( \lambda \hat{\beta} _{N }^{\top }X + (1-\lambda)\beta_N^\top\mu \right) |\hat{\beta}%
_{N}^{\top }X\geq q_{1-\alpha }(\hat{\beta}_{N}^{\top }X|\hat{\beta}_N)\right]  \notag \\
-&\mathbb{E}%
\left[ \phi \left( {\beta}_{*}^{\top }X\right) |\hat{\beta}_N^{\top }X\geq
q_{1-\alpha }(\hat{\beta}_{N}^{\top }X|\hat{\beta}_N)\right] =\mathbb{E}\left[ \int_{\frac{ \hat{\beta}_{N}^{\top }\Sigma \beta _{\ast }%
}{\sqrt{\hat{\beta}_{N}^{\top }\Sigma \hat{\beta}_{N}}}}^{\lambda\sqrt{\hat{\beta}%
_{N}^{\top }\Sigma \hat{\beta}_{N}}}h^{\prime }(t)\mathrm{d}t\right]
+O\left( \frac{1}{N}\right). \label{eq:lambda_bias}
\end{align}
\end{corollary}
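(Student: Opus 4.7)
The plan is to reduce Corollary~\ref{cor:lambda} to Theorem~\ref{thm:main} by observing that the debiased estimator $\lambda\hat{\beta}_N^{\top}X+(1-\lambda)\hat{\beta}_N^{\top}\mu$ is an affine transformation of $\hat{\beta}_N^{\top}X$ that preserves the mean $\hat{\beta}_N^{\top}\mu$ and rescales the conditional variance by a factor $\lambda^{2}$. Crucially, both the selection event $\{\hat{\beta}_N^{\top}X\geq q_{1-\alpha}(\hat{\beta}_N^{\top}X\mid\hat{\beta}_N)\}$ and the ``truth'' $\beta_{*}^{\top}X$ are untouched by the discount, so only the estimated first term of the bias changes.

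First I would rewrite $\lambda\hat{\beta}_N^{\top}X+(1-\lambda)\hat{\beta}_N^{\top}\mu = \hat{\beta}_N^{\top}\mu + \lambda(\hat{\beta}_N^{\top}X-\hat{\beta}_N^{\top}\mu)$. Conditional on $\hat{\beta}_N$, the Gaussian assumption on $X$ gives $(\hat{\beta}_N^{\top}X-\hat{\beta}_N^{\top}\mu)/\sqrt{\hat{\beta}_N^{\top}\Sigma\hat{\beta}_N}\sim\mathcal{N}(0,1)$, and by the same standardization argument used for the first display of Theorem~\ref{thm:main},
\begin{equation*}
\mathbb{E}\!\left[\phi\!\left(\lambda\hat{\beta}_N^{\top}X+(1-\lambda)\hat{\beta}_N^{\top}\mu\right)\mid \hat{\beta}_N^{\top}X\geq q_{1-\alpha}(\hat{\beta}_N^{\top}X\mid\hat{\beta}_N)\right] = \mathbb{E}\!\left[\phi\!\left(\hat{\beta}_N^{\top}\mu+\lambda\sqrt{\hat{\beta}_N^{\top}\Sigma\hat{\beta}_N}\,Z\right)\mid Z\geq q_{1-\alpha}(Z)\right].
\end{equation*}
This is precisely the first display of Theorem~\ref{thm:main} with the ``width'' $\sqrt{\hat{\beta}_N^{\top}\Sigma\hat{\beta}_N}$ replaced by $\lambda\sqrt{\hat{\beta}_N^{\top}\Sigma\hat{\beta}_N}$.

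Next, define $H(t)\triangleq\mathbb{E}[\phi(\hat{\beta}_N^{\top}\mu+tZ)\mid Z\geq q_{1-\alpha}(Z)]$, so that $H'(t)=h'(t)$ by differentiation under the integral sign (justified by the Lipschitz hypothesis on $\phi$). Writing $a=\hat{\beta}_N^{\top}\Sigma\beta_{*}/\sqrt{\hat{\beta}_N^{\top}\Sigma\hat{\beta}_N}$ and $b=\sqrt{\hat{\beta}_N^{\top}\Sigma\hat{\beta}_N}$, Theorem~\ref{thm:main} together with the identity $\mathbb{E}[\phi(\hat{\beta}_N^{\top}X)\mid \text{sel}]=\mathbb{E}[H(b)]$ is equivalent to the single identity $\mathbb{E}[\phi(\beta_{*}^{\top}X)\mid\text{sel}]=\mathbb{E}[H(a)]+O(1/N)$. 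Subtracting this from the display above, now rewritten as $\mathbb{E}[H(\lambda b)]$, and invoking the fundamental theorem of calculus $H(\lambda b)-H(a)=\int_{a}^{\lambda b}h'(t)\,\mathrm{d}t$, delivers (\ref{eq:lambda_bias}).

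The argument is essentially a reparameterization, so there is no substantial obstacle; the $O(1/N)$ remainder is inherited verbatim from Theorem~\ref{thm:main} because neither the selection event nor $\beta_{*}^{\top}X$ is affected by the $\lambda$-discount, and the dependence of that remainder on $\phi$, $\beta_{*}$, $\Sigma$ (and not $\alpha$ or $\lambda$) is preserved. The only mildly subtle point worth checking is that the integral must be interpreted with a sign when $\lambda b<a$, which can occur if the discount overcorrects; this is handled transparently by the FTC since $H\in C^{1}$ under the Lipschitz and differentiability assumptions on $\phi$.
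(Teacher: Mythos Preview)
Your proposal is correct and follows essentially the same approach as the paper: observe that conditional on $\hat{\beta}_N$ the discounted score $\lambda\hat{\beta}_N^{\top}X+(1-\lambda)\hat{\beta}_N^{\top}\mu$ is $\mathcal{N}(\hat{\beta}_N^{\top}\mu,\lambda^{2}\hat{\beta}_N^{\top}\Sigma\hat{\beta}_N)$, standardize so that the first term becomes $\mathbb{E}[\phi(\hat{\beta}_N^{\top}\mu+\lambda\sqrt{\hat{\beta}_N^{\top}\Sigma\hat{\beta}_N}\,Z)\mid Z\geq q_{1-\alpha}(Z)]$, and then reuse the analysis of the second term from Theorem~\ref{thm:main} verbatim. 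Your write-up is in fact more explicit than the paper's, which simply records the standardization display and states that ``the remaining proof follows similar lines with the proof of Theorem~\ref{thm:main}.''
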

The transformation  $\hat{\beta}_N^\top X\mapsto \lambda \hat{\beta}_N^\top X + (1-\lambda) \beta_N^\top\mu $ is linear; thus, it  does not alter the item rankings. Despite its simplicity, we are able to find a $\lambda$ such that the leading term of (\ref{eq:lambda_bias}) is approximately zero in Section \ref{sec:algo}, i.e., $$\mathbb{E}\left[ \int_{\frac{ \hat{\beta}_{N}^{\top }\Sigma \beta _{\ast }%
}{\sqrt{\hat{\beta}_{N}^{\top }\Sigma \hat{\beta}_{N}}}}^{\lambda\sqrt{\hat{\beta}%
_{N}^{\top }\Sigma \hat{\beta}_{N}}}h^{\prime }(t)\mathrm{d}t\right]\approx 0.$$

\section{Variance-adjusting Debiasing Meta-algorithm}
\label{sec:algo}
Based on the theory developed in Section \ref{sec:generalize_linear}, the goal is to find $\lambda$ such that
\begin{equation*}
\lambda\sqrt{\hat{\beta}%
_{N}^{\top }\Sigma \hat{\beta}_{N}}\approx\frac{ \hat{\beta}_{N}^{\top }\Sigma \beta _{\ast }%
}{\sqrt{\hat{\beta}_{N}^{\top }\Sigma \hat{\beta}_{N}}} \Leftrightarrow  \lambda\approx\frac{ \hat{\beta}_{N}^{\top }\Sigma \beta _{\ast }%
}{{\hat{\beta}_{N}^{\top }\Sigma \hat{\beta}_{N}}}.
\end{equation*}
The denominator $\hat{\beta}_{N}^{\top }\Sigma \hat{\beta}_{N}$ is easy to estimate as
$\hat{\beta}_{N}^{\top }\Sigma \hat{\beta}_{N}=\mathrm{Var}_{\mathcal{D}_\mathrm{test}}(\hat{\beta}_{N}^{\top }X|\hat{\beta}_{N}).$
To estimate the nominator $\hat{\beta}_{N}^{\top }\Sigma \beta _{\ast }$, we first observe the decomposition that
\begin{eqnarray}
&& \mathbb{E}[\hat{\beta}_{N}^{\top }\Sigma \hat{\beta}_{N}] = \mathbb{E}[\hat{\beta}_{N}^{\top }\Sigma \beta _{\ast }]+\mathbb{E}_{\mathcal{D}_\mathrm{test}}\left[\mathrm{Var}_{\mathcal{D}_\mathrm{train}}\left(\hat{\beta}_{N}^{\top } X-\hat{\beta}_{N}^{\top } \mu|X\right)\right]
\label{eq:decomposition}
\end{eqnarray}
provided that $\mathbb{E}[\hat{\beta}_N] =\beta_*$; this equality is proved in Appendix \ref{appendx:proofs}.  Therefore, $\hat{\beta}_{N}^{\top }\Sigma \beta _{\ast}$ could be approximated by $$\hat{\beta}_{N}^{\top }\Sigma \hat{\beta}_{N} -  \mathbb{E}_{\mathcal{D}_\mathrm{test}}\left[\mathrm{Var}_{\mathcal{D}_\mathrm{train}}\left(\hat{\beta}_{N}^{\top } X-\hat{\beta}_{N}^{\top } \mu|X\right)\right],$$
where the variance of $\hat{\beta}_N$ could be estimated by bootstrapping or retraining the model using different random seeds. Note that this approximation is relatively accurate if the feature dimension $d$ is large and the correlations between dimensions are small.  This is because in this case, $\hat{\beta}_{N}^{\top }\Sigma \hat{\beta}_{N}$ and $\hat{\beta}_{N}^{\top }\Sigma {\beta}_{*}$ concentrate on their means with an error $O_p(1/\sqrt{d})$:
\begin{eqnarray*}
\hat{\beta}_{N}^{\top }\Sigma \hat{\beta}_{N} &= &\mathbb{E}[\hat{\beta}_{N}^{\top }\Sigma \hat{\beta}_{N}] +O_p\left(\frac{1}{\sqrt{d}}\right) \text{ and } \hat{\beta}_{N}^{\top }\Sigma {\beta}_{*} = \mathbb{E}[\hat{\beta}_{N}^{\top }\Sigma {\beta}_{*}] +O_p\left(\frac{1}{\sqrt{d}}\right).
\end{eqnarray*}
%where the random  sequence ${X_N} =O_p(g(N))$ means that for any $\varepsilon > 0$, $\mathbb{P}(X_N\leq M_\varepsilon g(N))\geq 1- \varepsilon$ for some $M_\varepsilon$ when $N\rightarrow +\infty$.
In the context of generalized linear models, we observe that $\hat{\beta}_{N}^{\top } X = \phi^{-1} (f)$. Therefore, in general cases, we define $l(x)= \phi^{-1} (f(x)) $. In practice, $l(x)$ can be obtained either by inverting $\phi$ or extracting the last layer of the NN if the neural network is used as the prediction model with $\phi$ as the activation function of the last layer. Then, based on these analysis,
we propose a variance-adjusting debiasing (VAD) meta-algorithm in Algorithm \ref{algo:vad}. Since the $\lambda$ estimation procedure is fully non-parametric and  depends only on some means, variances, and conditional variances, our meta-algorithm applies to any machine learning algorithm. %\yewen{supervised learning algorithms, e.g. NN, LR, GBDT...}.
\begin{algorithm}[ht]
\caption{Variance-adjusting debiasing (VAD) method}\label{algo:vad}

\begin{algorithmic}[1]
\STATE  \textbf{Input:} Training dataset $\mathcal{D}_{\mathrm{train}}$ , the unlabeled test validation set $\mathcal{D}_{\mathrm{val-test},X}$, a link function $\phi:\mathbb{R}\rightarrow[0,1]$, and the number of replications $S$.
\STATE \textbf{Output:} A variance adjusting debiased predictor $f_{\mathrm{VAD}}$.

\STATE Train a model on the training set and obtain the predictor $f_1$.
\STATE Bootstrap (i.e. random sampling with replacement) the dataset $S -1$ times or retrain the model $S-1$ times using different random seeds and obtain the predictor $f_2,\ldots,f_S$.
\STATE Let $l_i(x) = \phi^{-1}(f_i(x))$ for $i\in [S]$. Then compute the means $\bar{Y}_i^l=\mathbb{E}_{\mathcal{D}_{\mathrm{val-test},X}}[l_i(X)]$ for $i \in [S]$
the test variance $\left(\hat{\sigma}^l_{\hat{Y}}\right)^2=\mathrm{Var}_{\mathcal{D}_{\mathrm{val-test},X}}[l_1(X)]$.
\STATE  Compute the expected conditional variance
\begin{align*}
 \left(\hat{\sigma}_f^l\right)^2 =&\mathbb{E}_{\mathcal{D}_{\mathrm{val-test},X}}\left[\frac{1}{S-1} \sum_{j=1}^S \Bigg(l_j(X)-\bar{Y}_j^l
 \left.-\frac{1}{S} \left(\sum_{i=1}^S\left(l_i(X)-\bar{Y}_i^l\right)\right)\right)^2\right].
\end{align*}
\STATE  Compute
$
    \lambda = 1-{\left(\hat{\sigma}_f^l\right)^2}/{\left(\hat{\sigma}^l_{\hat{Y}}\right)^2}.
$
\STATE  Output $f_{\mathrm{VAD}}(\cdot) = \phi\left(\lambda l_{1}(\cdot)+(1-\lambda)\bar{Y}_1^l\right)$.

\end{algorithmic}
\end{algorithm}

Since the debiased predictor $f_{\mathrm{VAD}}$ is a monotonic transformation of the original predictor $f_1$ in Algorithm \ref{algo:vad}, the prediction rankings remain the same. Further, $\lambda$ and $\bar{Y}_1^l$ are computed purely offline as long as we have a unlabeled validation set that has the same distribution as the test set. Thus, no additional serving costs are added. Furthermore, since we only need to estimate means and variances, we only need samples from the unlabeled candidate set of reasonable size. We can sample recent data points in the large (and potentially non-stationary) candidate set to get sufficiently accurate estimates.

For the link function $\phi(\cdot)$ in logistic regression and NNs with a final sigmoid activation function, $\phi(x)=(1+\exp(-x))^{-1}$ and $l_i$ is the logit (the last layer in  NNs) of the predictor $f_i$.

$\phi(\cdot)$ could also be chosen as the identity mapping $\phi(x) = x$. We note that this choice of $\phi(\cdot)$ has similar performance to the choice of $\phi(x)=(1+\exp(-x))^{-1}$. We report additional results about the behavior of the identity link function in Appendix \ref{appendix:numericals}.

On Line 4 in Algorithm \ref{algo:vad}, we recommend bootstrapping \citep{efron1994introduction} if the base training model lacks intrinsic randomness, e.g., logistic regression, which is an efficiently solvable convex optimization problem. However, if the base model is highly non-convex with multiple local optima, e.g., neural networks, we recommend random initializations and random data orders because many empirical studies  \citep{nixon2020bootstrapped,lakshminarayanan2016simple,lee2015m} show that  bootstrapping  may hurt the performance in deep neural networks and the estimation of the conditional variances would benefit from the algorithmic randomness  \citep{jiang2021assessing}.
%The Mini-batch gradient descent-type optimizers,

In our method, we only need to choose one hyper-parameter $S$ and we do not need to specify $\alpha$. In fact, $S=2$ would be sufficient and results in lower training cost. Therefore, our method doubles the training cost, which is usually acceptable in practice. More importantly, our method  does not incur any additional online serving costs.  All results reported in Section \ref{sec:exp} use $S=2$.  We report additional results about different $S$ choices  in Appendix \ref{appendix:numericals}.

\section{Experiment Results}

\label{sec:exp}
In this section, we demonstrate the performance of our method in both synthetic data and a real-world ads recommendation dataset. We use calibration errors, ECE, and MCE to evaluate performances.  Note that evaluation metrics are calculated on the selection set, i.e., we choose top $\alpha$ proportion of test data points using model predictions and compute the evaluation metrics in the top-$\alpha$ selection set. We provide additional numerical results for the Avazu dataset \footnote{https://www.kaggle.com/c/avazu-ctr-prediction} in Appendix \ref{appendix:avazu:data}.

\vspace{-0.1cm}
\subsection{Synthetic Data}
\vspace{-0.1cm}
\textbf{Data and Model} We consider a logistic regression model. We assume the response $Y$ follows the Bernoulli distribution with probability $\left(1+\exp(-\beta^\top X)\right)^{-1}$, for $\beta,X\in \mathbb{R}^d$.

In the experiments, we consider $d=20$, $\beta = [1,1,\ldots,1]^\top$, $X\sim \mathcal{N}(\mu_{\mathrm{train}},\Sigma_{\mathrm{train}})$ in the training set $\mathcal{D}_{\mathrm{train}}$, and $X\sim\mathcal{N}(\mu_{\mathrm{test}},\Sigma_{\mathrm{test}})$ in the test set $\mathcal{D}_{\mathrm{test}}$.  In the training set, we generate $N_{\mathrm{train}}=3000$ i.i.d. training samples to train a logistic regression model. For our proposed method, we bootstrap $S=2$ times. In the test set, we choose top $\alpha$ proportion of $N_{\mathrm{test}}=30000$ data points using the model predictions and compute the evaluation metrics in the top-$\alpha$ selection set. We generate $N_{\mathrm{val-test}}=30000$ data points as the test validation dataset.

  \textbf{Covariate Shift} We assume there is a covariate shift and no concept drifts,
  %(the conditional distribution $Y|X$ remain the same between test and  training distribution)
  where we assume
  \begin{align*}
  &\mu_{\mathrm{train}}= [0.05,0.05,\ldots,0.05]^\top, \text{ and }
  \mu_{\mathrm{test}} = [-0.05,-0.05,\ldots,-0.05]^\top,
  \end{align*}
  and $\Sigma_{\mathrm{train}} = \Sigma_{\mathrm{test}}=0.1^2\times I_{d\times d}$. The positive sample rate is 72.3\% in the training set, which is higher than the positive sample ratio in the test set (27.7\%). This is consistent with  real-world recommendation systems since the training data is the previously recommended sets, in which items should have better performance than items in the whole candidate sets (i.e. test sets).

\textbf{Performance} We replicate the experiments 100 times and report averages and standard errors of calibration errors and ECE (with number of bins $M = 10$) for $\alpha \in \{2\%,10\%\}$ in Table \ref{tab:out-of-distribution_lr}. (MCE is in Appendix \ref{appendix:syn}), where \texttt{Vanilla} stands for original predictions. We plot average calibration errors and average ECE for $\alpha \in [2\%,10\%]$ in Appendix \ref{appendix:syn}. We also report Log Loss improvement in Appendix \ref{appendix:syn}, indicating our method improves prediction quality as well.

% Table generated by Excel2LaTeX from sheet 'Sheet1'
\begin{table}[!thbp]
  \centering
  \caption{Average and standard errors  of calibration errors and ECE for synthetic data}
   \begin{tabular}{lcc|cc}
   \toprule
   & \multicolumn{2}{c|}{ $\alpha=2\%$} & \multicolumn{2}{c}{ $\alpha=10\%$} \\
    Method & Calibration error & ECE  & Calibration error & ECE    \\
     \midrule
  Vanilla & 8.55\%$\pm$0.68\% & 0.0656$\pm$0.0019 & 7.34\%$\pm$0.75\% & 0.0425$\pm$0.0021   \\
  VAD & \textbf{0.06}\%$\pm$0.72\% & \textbf{0.0572}$\pm$0.0013 & \textbf{0.62\%}$\pm$0.73\% & \textbf{0.0334}$\pm$0.0015   \\
  \bottomrule
    \end{tabular}%

  \label{tab:out-of-distribution_lr}%
\end{table}%
Note that we do not compare with other calibration methods under a logistic regression model since logistic regression produces well-calibrated predictions. Applying other calibration methods to a logistic regression model will not improve the performance. \citep{niculescu2005predicting}.

Table \ref{tab:out-of-distribution_lr} shows that the vanilla model without debiasing has a calibration error of more than 7\%, which is mainly due to maximization bias as logistic regression produces well-calibrated predictions. After applying VAD, the calibration error is sufficiently close to zero. All the superiority is statistically significant at the 1\% significance level. Additional experiment results are reported in Appendix \ref{appendix:syn}.

\vspace{-0.1cm}
\subsection{Real-world Data}
\vspace{-0.1cm}

\label{sec:real}
\textbf{Dataset} We use the Criteo Ad Kaggle dataset \footnote{https://www.kaggle.com/c/criteo-display-ad-challenge} to demonstrate our method's performance. The Criteo Ad Kaggle dataset is a common benchmark dataset for CTR predictions. It consists of a week's worth of data, approximately 45 million samples in total. Each data point contains a binary label, which indicates whether the user clicks or not, along with 13 continuous, 26 categorical features. The positive label accounts for $25.3\%$ of all data. The categorical features consist of  1.3 million categories on average, with 1 feature having more than 10 million categories, 5 features having more than 1 million categories. Due to computational constraints in our experiments, we use the first 15 million samples, shuffle the dataset randomly, and split the whole dataset into $85\%$ train $\mathcal{D}_\mathrm{train}$, $1.5\% $ validation-train $\mathcal{D}_\mathrm{val-train}$, $1.5\%$ validation-test $\mathcal{D}_\mathrm{val-test}$, and $12\%$ test $\mathcal{D}_\mathrm{test}$ datasets.

\textbf{Base Model} We use the state-of-the-art deep learning recommendation model (DLRM) \citep{naumov2019deep} open-sourced by Meta as our baseline model. DLRM employs a standard architecture for ranking tasks, with embeddings to handle categorical features, Multilayer perceptrons (MLPs) to handle continuous features and the interactions of categorical features and continuous features. Throughout our experiments, we use the default parameters and a SGD optimizer. Note that our method is model-agnostic, so it can be directly applied to other models (e.g. support vector machines,  boosted trees, nearest neighbors, etc \cite{friedman2001elements}).

\textbf{Baseline Calibration Methods}
We compare our method with various classic calibration methods.
For parametric methods, we compare against Platt scaling \citep{platt1999probabilistic}.
For non-parametric methods, we compare against histogram binning \citep{zadrozny2002transforming, zadrozny2001obtaining}, isotonic regression \citep{menon2012predicting}, and scaling-binning calibrator \citep{kumar2019verified}. We use the labeled training validation dataset to do calibration for above methods.

Note that none of the existing work explicitly considers maximization bias and thus fails to perform well in our setting. VAD can be combined with all existing calibration methods to achieve better performance by making a small change to the original VAD algorithm. VAD takes predictions calibrated by other calibration methods as inputs. Additionally, instead of directly using $\lambda$ calculated from $\mathcal{D}_{\mathrm{val-test},X}$, we first calculate $\lambda_{\mathrm{val-test}}$ from  $\mathcal{D}_{\mathrm{val-test},X}$ using original predictions and $\lambda_{\mathrm{val-train}}$ from $\mathcal{D}_{\mathrm{val-train},X}$ (the unlabeled validation set which has the same distribution as the X
margin of the training set) using original predictions, and then use $\lambda =\lambda_{\mathrm{val-test}} /  \lambda_{\mathrm{val-train}}$ to adjust predictions calibrated by other methods. This change is due to the fact that other calibration methods already compensate for maximization bias in training distribution to some extent.

\textbf{VAD Parameters} The last layer of the DLRM network uses the sigmoid activation function. In our method, by using the link function $\phi(x) = (1+\exp(-x)) ^{-1}$, we compute the means $\bar{Y}_i^l$, variance
$(\hat{\sigma}_{\hat{Y}}^l)^2$, and expected conditional variance $(\hat{\sigma}_{f}^l)^2$ of the last layer's  neuron.
To compute the expected prediction variance $(\hat{\sigma}_{f}^l)^2$, we keep the training data unchanged and modify the random initialization and data orders since the optimizer itself incurs sufficient randomness.  In the experiment, we train $S=2$ times for our method.

\textbf{Covariate Shift} Since the underlying true data-generating process is unknown, we employ a different strategy to construct out-of-distribution test data than how we generate synthetic data: we train another DLRM model using  $85\% \times 15$ million samples different from the original dataset; we randomly keep each data point in the original test set with probability $1 - p$, where $p$ is the newly-trained DLRM model prediction for the data point. The training set remains the same. After this shift, the positive samples account for 20.1\% of all test data. By doing this, we ensure that the distributional change is  only a covariate shift, and the positive sample ratio in the test data is lower than the positive sample ratio in the training data, which is consistent with the real-world recommendation systems.

\textbf{Performance} We replicate the experiments 40 times and report averages and standard errors of calibration errors and ECE (with number of bins $M = 50$) for $\alpha \in \{2\%,10\%\}$ in Tables \ref{tab:out-of-distribution-kaggle} and  \ref{tab:ECE-kaggle}. (MCE is in Appendix \ref{appendix:real}), where \texttt{Original} stands for using calibration methods solely and \texttt{VAD+} represents the tandem combination of the calibration methods and VAD. We plot average calibration errors and average ECE for $\alpha \in [2\%,10\%]$ in Figure \ref{fig:out_of_distribution_kaggle}. We also report Log Loss improvement in Appendix \ref{appendix:real}, indicating our method also improves prediction quality.

\begin{table}[htbp]
  \centering
  \caption{Average calibration errors on the Criteo Ad Kaggle dataset}
   \begin{tabular}{lcc|cc}
   \toprule
   & \multicolumn{2}{c|}{ $\alpha=2\%$} & \multicolumn{2}{c}{ $\alpha=10\%$} \\
    Method & Original & VAD+  & Original & VAD+    \\
     \midrule
    Vanilla & 3.23\%$\pm$0.41\% & \textbf{-0.44\%}$\pm$0.46\% & 3.94\%$\pm$0.63\% & \textbf{-0.47}\%$\pm$0.66\%   \\
    Histogram Binning & 2.18\%$\pm$0.06\% & 1.44\%$\pm$0.06\% & 1.87\%$\pm$0.04\% & 0.98\%$\pm$0.04\%   \\
    Platt Scaling & 1.60\%$\pm$0.13\% & 0.86\%$\pm$0.12\% & 1.80\%$\pm$0.07\% & 0.91\%$\pm$0.07\%   \\
    Scaling-Binning & 2.30\%$\pm$0.13\% & 1.56\%$\pm$0.12\% & 1.99\%$\pm$0.07\% & 1.11\%$\pm$0.07\%  \\
    Isotonic Regression & 1.62\%$\pm$0.06\% & 0.87\%$\pm$0.07\% & 1.75\%$\pm$0.04\% & 0.85\%$\pm$0.04\%   \\
    \bottomrule
    \end{tabular}%
  \label{tab:out-of-distribution-kaggle}%
\end{table}%

\begin{table}[htbp]
  \centering
  \caption{Average ECE on the Criteo Ad Kaggle dataset}
    \begin{tabular}{lcc|cc}
   \toprule
   & \multicolumn{2}{c|}{ $\alpha=2\%$} & \multicolumn{2}{c}{ $\alpha=10\%$} \\
    Method & Original & VAD+  & Original & VAD+    \\
     \midrule

    Vanilla & 0.0287$\pm$0.0024 & 0.0225$\pm$0.0016 & 0.0271$\pm$0.0029 & 0.0208$\pm$0.0022   \\
    Histogram Binning & 0.0213$\pm$0.0003 & 0.0185$\pm$0.0003 & 0.0132$\pm$0.0002 & 0.0105$\pm$0.0002   \\
    Platt Scaling & 0.0179$\pm$0.0006 & \textbf{0.0158}$\pm$0.0004 & 0.0121$\pm$0.0003 & \textbf{0.0092}$\pm$0.0002   \\
    Scaling-Binning & 0.0217$\pm$0.0007 & 0.0188$\pm$0.0005 & 0.0131$\pm$0.0003 & 0.0101$\pm$0.0002   \\
    Isotonic Regression & 0.0193$\pm$0.0004 & 0.0175$\pm$0.0003 & 0.0126$\pm$0.0002 & 0.0101$\pm$0.0002   \\
    \bottomrule
    \end{tabular}%
  \label{tab:ECE-kaggle}%
\end{table}%

\begin{figure}[!ht]
	\centering
	%%\captionsetup{belowskip=-10pt}
	\subfigure[Calibration Error]{
		\label{fig:calibration_kaggle} \includegraphics[width=6.5cm]{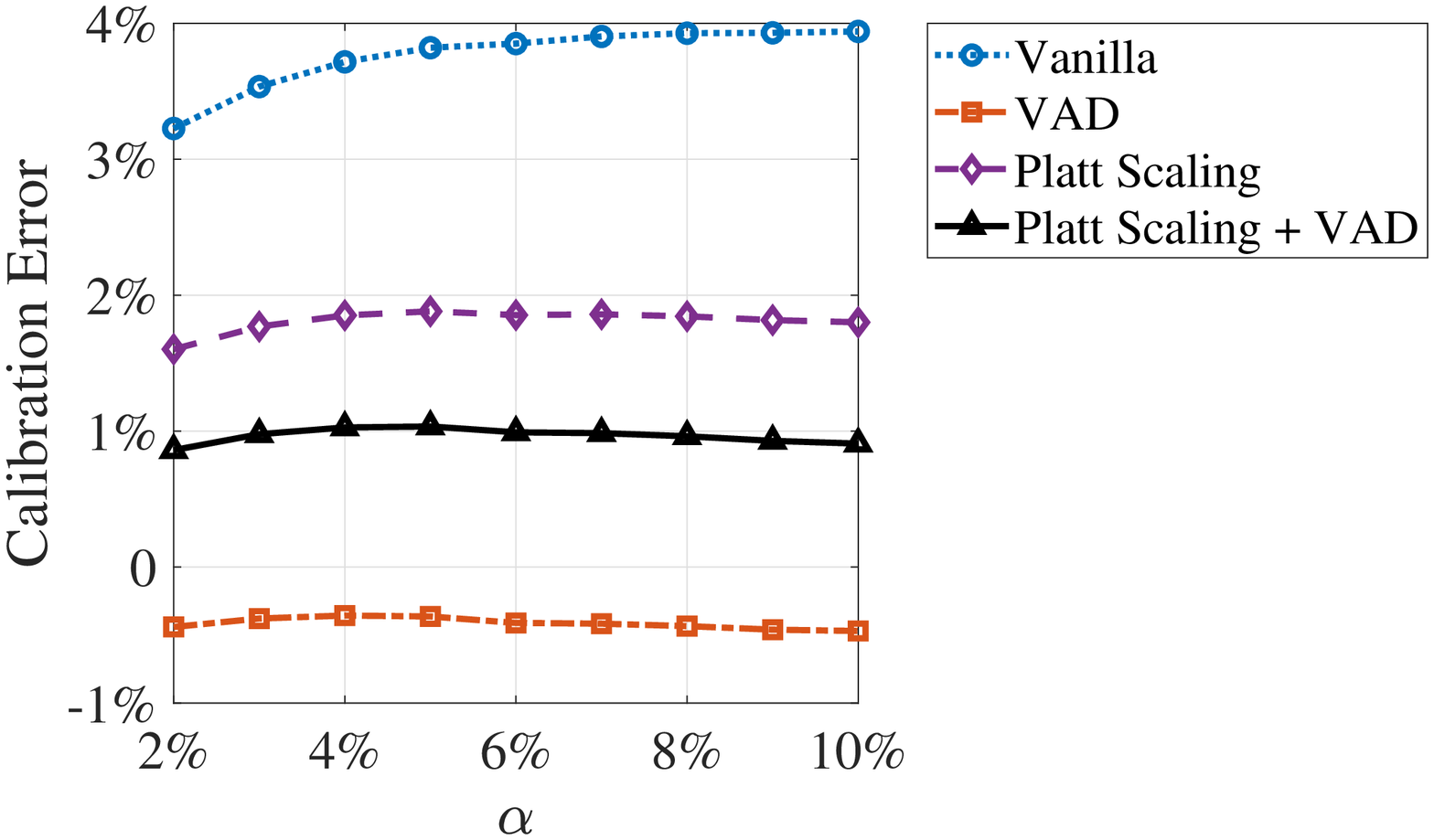}}
	\subfigure[ECE]{
		\label{fig:ECE_kaggle} \includegraphics[width=6.5cm]{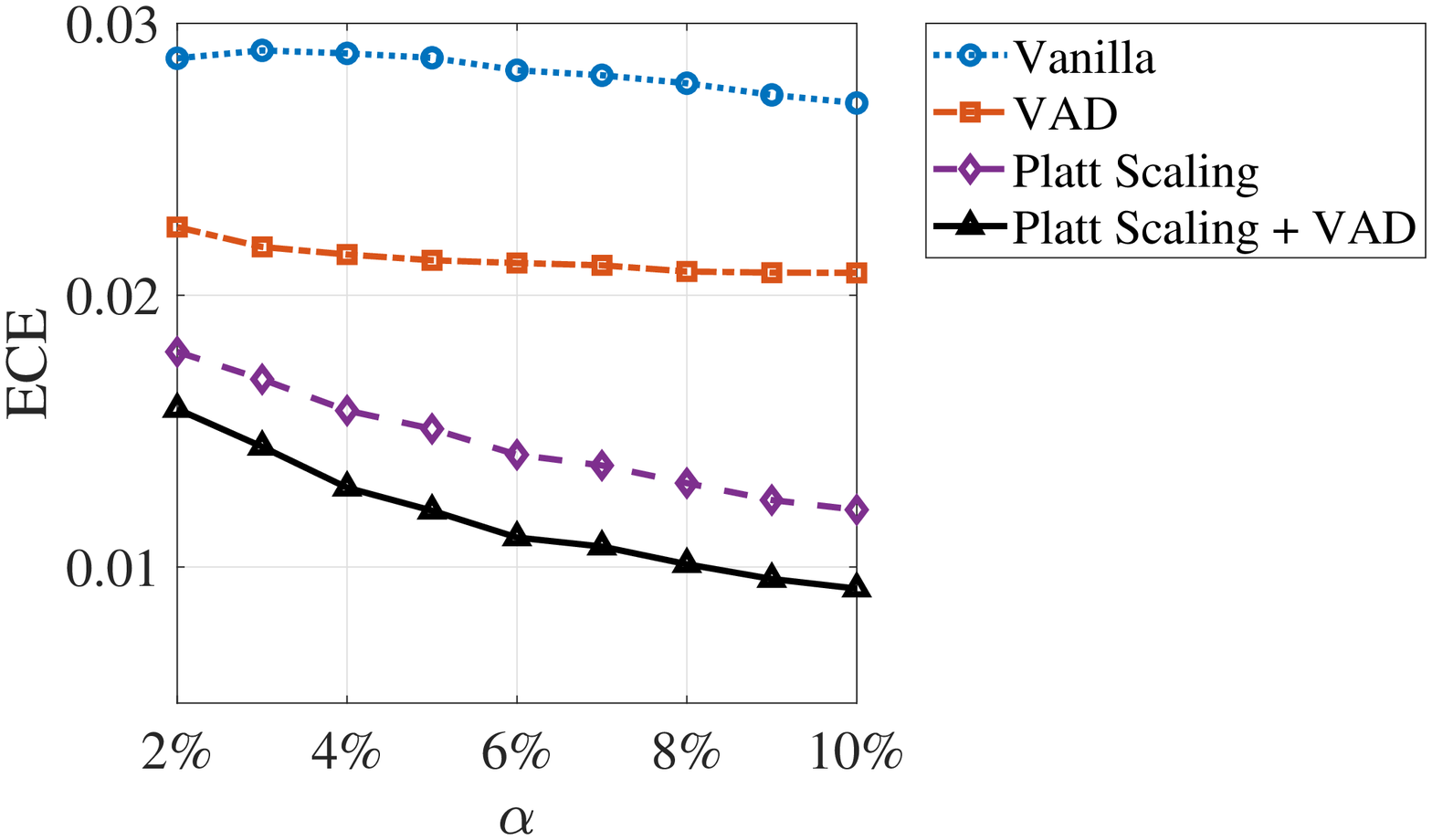}}
	%%\subfigure[$n=50$ true posterior distribution]{
	%%\label{exp:3} \includegraphics[width=5cm]{50trialsnormaldensity.eps}}
	\caption{Average calibration errors and ECE for  the Criteo Ad Kaggle dataset}
	\label{fig:out_of_distribution_kaggle}
\end{figure}
Table \ref{tab:out-of-distribution-kaggle} and Figure \ref{fig:calibration_kaggle} show that the vanilla model without debiasing has a calibration error more than 3\%. After debiased by existing calibration methods, there are still $1.5\% \sim 2.0\%$ over-calibrations, which are largely due to maximization bias. Among all methods, the sole VAD method (i.e., not functioning in tandem with any other calibration methods) perform the best, but has a large variance.
Moreover, from Tables \ref{tab:out-of-distribution-kaggle}, \ref{tab:ECE-kaggle} and Figure \ref{fig:out_of_distribution_kaggle}, We find that our method (VAD) outperforms the vanilla method. Particularly, all the calibration methods in tandem with VAD achieve better performance than the sole calibration methods alone. All the superiority is statistically significant at the 1\% significance level. Additional experiment results are reported in Appendix \ref{appendix:real}.

\section{Conclusion}
\vspace{-0.1cm}

We proposed a theory-certified meta-algorithm variance-adjusting debiasing (VAD) to tackle maximization bias in recommendation systems. The meta-algorithm is easy-to-implement by adding only a few lines, scalable to large-scale systems with no additional serving costs, applicable to any machine learning methods, and robust to covariate shifts between training and test sets. Empirical results show its significant superiority over other methods. %We plan to implement this meta-algorithm in practice.
Our method can be directly used in industry with minor modifications, e.g. doing VAD separately for each group of data instead of doing VAD globally.
Interesting follow-ups include combining VAD and other calibration methods in a better way and further reducing the training cost. We leave these for future work.
% report no-selection bias

\section{Reproducibility Statement}
We open-sourced our implementation at \url{https://github.com/tofuwen/VAD}.

\section{Acknowledgements}
This project was partially supported by the National Institutes of Health (NIH) under Contract R01HL159805, by the NSF-Convergence Accelerator Track-D award \#2134901, by a grant from Apple Inc., a grant from KDDI Research Inc, and generous gifts from Salesforce Inc., Microsoft Research, and Amazon Research.

\bibliography{mybib}
\bibliographystyle{iclr2023_conference}

\appendix
%\onecolumn
\section{Proofs}
\label{appendx:proofs}

Before the proof of Theorem \ref{thm:main}, we first collect some useful results from the standard MLE theory.
\begin{lemma}
\label{lma:MLE}
$\hat{\beta}$ follows the central limit theorem:%
\begin{equation*}
\sqrt{N}\left( \hat{\beta}_{N}-\beta _{\ast }\right) \Rightarrow N(0,%
\mathcal{I}^{-1}),
\end{equation*}%
where $\mathcal{I}$ is the Fisher information matrix defined as
\begin{equation*}
\mathcal{I}_{jk}=\mathbb{E}_{\mathcal{D}_{\mathrm{train}}}\left[ -\frac{%
\partial ^{2}\left( Y\ln \left( \phi \left( \beta _{\ast }^{\top }X\right)
\right) +(1-Y)\ln \left( 1-\phi \left( \beta _{\ast }^{\top }X\right)
\right) \right) }{\partial \beta _{j}\partial \beta _{k}}\right] .
\end{equation*}%
Furthermore, the bias of $\hat{\beta}$ is of order $1/N,$ i.e., $\mathbb{E}[\hat{\beta}_N%
-\beta _{\ast }] = O(1/N).$
\end{lemma}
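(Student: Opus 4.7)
The plan is to apply the classical maximum-likelihood asymptotic theory to the per-sample log-likelihood $\ell(\beta; X, Y) = Y \ln \phi(\beta^\top X) + (1-Y) \ln(1 - \phi(\beta^\top X))$ and its sum $L_N(\beta) = \sum_{i=1}^N \ell(\beta; X_i, Y_i)$. First I would verify the standard regularity conditions (identifiability of $\beta_*$, interior optimum, smoothness inherited from $\phi$ taking values in $(0,1)$ and being twice differentiable, and moment bounds on the score and Hessian under $\mathcal{D}_{\mathrm{train}}$), which together yield the consistency $\hat{\beta}_N \to \beta_*$ in probability. The per-sample score $\nabla_\beta \ell(\beta_*; X, Y)$ has mean zero and, by the information equality obtained from differentiating $\mathbb{E}_Y[\nabla_\beta \ell(\beta_*; X, Y) \mid X] = 0$ once more in $\beta$, covariance exactly equal to $\mathcal{I}$. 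Taylor-expanding the first-order condition $\nabla L_N(\hat{\beta}_N) = 0$ around $\beta_*$ gives
\begin{equation*}
0 \;=\; \frac{1}{\sqrt{N}} \nabla L_N(\beta_*) \;+\; \left( \frac{1}{N} \nabla^2 L_N(\tilde{\beta}_N) \right) \sqrt{N}(\hat{\beta}_N - \beta_*),
\end{equation*}
with $\tilde{\beta}_N$ on the segment between $\hat{\beta}_N$ and $\beta_*$. The multivariate CLT gives $N^{-1/2} \nabla L_N(\beta_*) \Rightarrow \mathcal{N}(0, \mathcal{I})$, the uniform LLN combined with consistency yields $N^{-1} \nabla^2 L_N(\tilde{\beta}_N) \to -\mathcal{I}$ in probability, and Slutsky's theorem then delivers $\sqrt{N}(\hat{\beta}_N - \beta_*) \Rightarrow \mathcal{N}(0, \mathcal{I}^{-1})$.

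For the $O(1/N)$ bias claim, a first-order expansion is insufficient: its leading term $\mathcal{I}^{-1}\, N^{-1} \nabla L_N(\beta_*)$ has mean zero, but the error is only $o_p(N^{-1/2})$, which need not have small expectation. The plan is to carry out a second-order stochastic expansion in the Cox--Snell / Efron style,
\begin{equation*}
\hat{\beta}_N - \beta_* \;=\; U_N + V_N + R_N,
\end{equation*}
where $U_N$ is the first-order linearization (mean zero), $V_N = O_p(1/N)$ collects quadratic contributions built from products of the centered score and the centered Hessian at $\beta_*$, and $R_N = O_p(N^{-3/2})$ is the cubic Taylor remainder. Taking expectations, $\mathbb{E}[V_N]$ is an explicit functional of third-order cumulants of $\ell$ at $\beta_*$, depending only on $\phi$, $\beta_*$, and the training distribution, and is therefore $O(1/N)$; combined with $\mathbb{E}[U_N] = 0$ and $\mathbb{E}[R_N] = o(1/N)$, this gives $\mathbb{E}[\hat{\beta}_N] - \beta_* = O(1/N)$.

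The main obstacle is the last step: turning the \emph{stochastic} order $R_N = O_p(N^{-3/2})$ into a bound on $|\mathbb{E}[R_N]|$ requires uniform integrability, not just convergence in probability. I would handle this by splitting on a good event $\{\|\hat{\beta}_N - \beta_*\| \le \delta_N\}$ (with $\delta_N$ slowly decreasing) on which a deterministic Taylor-remainder bound applies to $R_N$, and separately controlling the bad event using a tail estimate for $\hat{\beta}_N$---tractable here because $\phi$ is bounded Lipschitz and $X$ is Gaussian, making the per-sample score sub-exponential and the MLE tail controllable by standard $M$-estimator concentration. Everything else is bookkeeping with classical tools; the subtlety sits entirely in this uniform-integrability step.
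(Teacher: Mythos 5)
Your proposal is correct and takes essentially the same route as the paper: the paper's proof consists of two citations---Theorem 5.39 of \citet{van2000asymptotic} for the asymptotic normality claim and formula 20 of \citet{cox1968general} for the $O(1/N)$ bias---and your Taylor expansion of the first-order condition with CLT/ULLN/Slutsky, together with the second-order Cox--Snell-style stochastic expansion for the bias, are exactly the arguments underlying those two results. If anything, your explicit handling of the uniform-integrability step (good-event/bad-event split with sub-exponential tail control) is more careful than the formal expansion in the cited reference, but it is the same approach rather than a different one.
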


\begin{proof}The first claim follows from the Lipschitzness of the link function $\phi $
and Theorem 5.39 in \citet{van2000asymptotic}. The second claim follows from formula 20 in \citet{cox1968general}.
\end{proof}

Let $Z\sim \mathcal{N}(0,1),$ then conditional
on $\hat{\beta}_{N}^{\top },$ we have $\hat{\beta}_{N}^{\top }X\sim $ $%
\mathcal{N}\left( \hat{\beta}_{N}^{\top }\mu ,\hat{\beta}_{N}^{\top }\Sigma
\hat{\beta}_{N}\right) $ and $\beta _{\ast }^{\top }X\sim $ $\mathcal{N}%
\left( \beta _{\ast }^{\top }\mu ,\beta _{\ast }^{\top }\Sigma \beta _{\ast
}\right) $ the standard normal distribution. Then, we have for the estimated
average probability on the selection set conditional on $\hat{\beta}%
_{N}^{\top }$,
\begin{equation*}
\mathbb{E}_{\mathcal{D}_\mathrm{test}}\left[ \phi \left( \hat{\beta}_{N}^{\top }X\right) |\hat{\beta}_N%
^{\top }X\geq q_{1-\alpha }(\hat{\beta}_{N}^{\top }X|\hat{\beta}_N),\hat{\beta}_N\right] =%
\mathbb{E}\left[ \phi \left( \hat{\beta}_{N}^{\top }\mu +\sqrt{\hat{\beta}%
_{N}^{\top }\Sigma \hat{\beta}_{N}}Z\right) |Z\geq q_{1-\alpha }(Z),\hat{%
\beta}_{N}\right] ,
\end{equation*}%
Note that conditional on $\hat{\beta}_{N}^{\top }$, $\mathrm{Cov}_{\mathcal{D}_\mathrm{test}}\left( \hat{\beta}_{N}^{\top }X,\beta
_{\ast }^{\top }X|\hat{\beta}_{N}\right) =\hat{\beta}_{N}^{\top }\Sigma \beta _{\ast }$.
Therefore, we have
\begin{equation*}
\beta _{\ast }^{\top }X=\frac{\hat{\beta}_{N}^{\top }\Sigma \beta _{\ast }}{%
\hat{\beta}_{N}^{\top }\Sigma \hat{\beta}_{N}}\hat{\beta}_{N}^{\top
}X+\left( \beta _{\ast }^{\top }X-\frac{\hat{\beta}_{N}^{\top }\Sigma \beta
_{\ast }}{\hat{\beta}_{N}^{\top }\Sigma \hat{\beta}_{N}}\hat{\beta}%
_{N}^{\top }X\right) ,
\end{equation*}%
where%
\begin{equation*}
\left. \left( \beta _{\ast }^{\top }X-\frac{\hat{\beta}_{N}^{\top }\Sigma
\beta _{\ast }}{\hat{\beta}_{N}^{\top }\Sigma \hat{\beta}_{N}}\hat{\beta}%
_{N}^{\top }X\right) \perp \hat{\beta}_{N}^{\top }X\right\vert \hat{\beta}_{N}%
,\beta _{\ast }.
\end{equation*}%
Note that
\begin{equation*}
\left. \beta _{\ast }^{\top }X-\frac{\hat{\beta}_{N}^{\top }\Sigma \beta
_{\ast }}{\hat{\beta}_{N}^{\top }\Sigma \hat{\beta}_{N}}\hat{\beta}%
_{N}^{\top }X\right\vert \hat{\beta}_{N},\beta _{\ast }\sim \mathcal{N}\left(
\beta _{\ast }^{\top }\mu -\frac{\hat{\beta}_{N}^{\top }\Sigma \beta _{\ast }%
}{\hat{\beta}_{N}^{\top }\Sigma \hat{\beta}_{N}}\hat{\beta}_{N}^{\top }\mu
,\beta _{\ast }^{\top }\Sigma \beta _{\ast }-\frac{\left( \hat{\beta}%
_{N}^{\top }\Sigma \beta _{\ast }\right) ^{2}}{\hat{\beta}_{N}^{\top }\Sigma
\hat{\beta}_{N}}\right)
\end{equation*}%
then, we have
\begin{equation*}
\beta _{\ast }^{\top }X\overset{d}{=}\frac{\hat{\beta}_{N}^{\top }\Sigma
\beta _{\ast }}{\sqrt{\hat{\beta}_{N}^{\top }\Sigma \hat{\beta}_{N}}}Z+\sqrt{%
\beta _{\ast }^{\top }\Sigma \beta _{\ast }-\frac{\left( \hat{\beta}%
_{N}^{\top }\Sigma \beta _{\ast }\right) ^{2}}{\hat{\beta}_{N}^{\top }\Sigma
\hat{\beta}_{N}}}Z_{2}+\beta _{\ast }^{\top }\mu ,
\end{equation*}%
where $Z_{2}\sim \mathcal{N}(0,1)$ independent to $Z$ and $\overset{d}{=}$
means equal in distribution. Thus, the actual average probability can be
reformulated as
\begin{align*}
&\mathbb{E}_{\mathcal{D}_\mathrm{test}}\left[ \phi \left( \beta _{\ast }^{\top }X\right) |\hat{\beta}%
_{N}^{\top }X\geq q_{1-\alpha }(\hat{\beta}_{N}^{\top }X|\hat{\beta}_N),\hat{\beta}_N\right] \\
=&\mathbb{E}\left[ \left. \phi \left( \frac{\hat{\beta}_{N}^{\top }\Sigma
\beta _{\ast }}{\sqrt{\hat{\beta}_{N}^{\top }\Sigma \hat{\beta}_{N}}}Z+\sqrt{%
\beta _{\ast }^{\top }\Sigma \beta _{\ast }-\frac{\left( \hat{\beta}%
_{N}^{\top }\Sigma \beta _{\ast }\right) ^{2}}{\hat{\beta}_{N}^{\top }\Sigma
\hat{\beta}_{N}}}Z_{2}+\beta _{\ast }^{\top }\mu \right) \right\vert Z\geq
q_{1-\alpha }(Z),\hat{\beta}_N\right] .
\end{align*}%
Note that $\hat{\beta}_N$ only depends on the training set. Therefore, $\hat{%
\beta}_{N},Z,Z_{2}$ are mutually independent. By the Taylor expansion, we
have%
\begin{eqnarray*}
&&\mathbb{E}_{\mathcal{D}_\mathrm{train}}\left[ \left. \phi \left( \frac{\hat{\beta}_{N}^{\top }\Sigma
\beta _{\ast }}{\sqrt{\hat{\beta}_{N}^{\top }\Sigma \hat{\beta}_{N}}}Z+\sqrt{%
\beta _{\ast }^{\top }\Sigma \beta _{\ast }-\frac{\left( \hat{\beta}%
_{N}^{\top }\Sigma \beta _{\ast }\right) ^{2}}{\hat{\beta}_{N}^{\top }\Sigma
\hat{\beta}_{N}}}Z_{2}+\beta _{\ast }^{\top }\mu \right) \right\vert Z\right]
\\
&=&\mathbb{E}_{\mathcal{D}_\mathrm{train}}\left[ \left. \phi \left( \frac{\hat{\beta}_{N}^{\top }\Sigma
\beta _{\ast }}{\sqrt{\hat{\beta}_{N}^{\top }\Sigma \hat{\beta}_{N}}}Z+\beta
_{\ast }^{\top }\mu \right) \right\vert Z\right] \\
&+&\mathbb{E}_{\mathcal{D}_\mathrm{train}}\left[ \left. \phi' \left( \frac{\hat{\beta}_{N}^{\top }\Sigma
\beta _{\ast }}{\sqrt{\hat{\beta}_{N}^{\top }\Sigma \hat{\beta}_{N}}}Z+\beta
_{\ast }^{\top }\mu \right)  \sqrt{\beta _{\ast }^{\top }\Sigma \beta _{\ast }-\frac{\left( \hat{%
\beta}_{N}^{\top }\Sigma \beta _{\ast }\right) ^{2}}{\hat{\beta}_{N}^{\top
}\Sigma \hat{\beta}_{N}}}  \right  \vert Z\right]  \mathbb{E}\left[ Z_{2}\right] +O\left(
\frac{1}{N}\right)  \\
&=&\mathbb{E}_{\mathcal{D}_\mathrm{train}}\left[ \left. \phi \left( \frac{\hat{\beta}_{N}^{\top }\Sigma
\beta _{\ast }}{\sqrt{\hat{\beta}_{N}^{\top }\Sigma \hat{\beta}_{N}}}Z+\beta
_{\ast }^{\top }\mu \right) \right\vert Z\right] +O\left( \frac{1}{N}\right).
\end{eqnarray*}%
By Lemma \ref{lma:MLE}, we have%
\begin{equation*}
\mathbb{E}_{\mathcal{D}_\mathrm{train}}\left[ \left( \beta _{\ast }^{\top }\Sigma \beta _{\ast }-\frac{%
\left( \hat{\beta}_{N}^{\top }\Sigma \beta _{\ast }\right) ^{2}}{\hat{\beta}%
_{N}^{\top }\Sigma \hat{\beta}_{N}}\right) \right] =O\left( \frac{1}{N}%
\right) .
\end{equation*}%
Finally by taking the Taylor expansion of $\phi \left( \hat{\beta}_{N}^{\top
}\mu +\sqrt{\hat{\beta}_{N}^{\top }\Sigma \hat{\beta}_{N}}Z\right) $ around $%
\frac{\hat{\beta}_{N}^{\top }\Sigma \beta _{\ast }}{\sqrt{\hat{\beta}%
_{N}^{\top }\Sigma \hat{\beta}_{N}}}Z+\beta _{\ast }^{\top }\mu ,$ we have
\begin{equation*}
\mathbb{E}_{\mathcal{D}_\mathrm{train}}\left[ \left. \phi \left( \frac{\hat{\beta}_{N}^{\top }\Sigma
\beta _{\ast }}{\sqrt{\hat{\beta}_{N}^{\top }\Sigma \hat{\beta}_{N}}}Z+\beta
_{\ast }^{\top }\mu \right) \right\vert Z\right] =\mathbb{E}_{\mathcal{D}_\mathrm{train}}\left[ \left.
\phi \left( \frac{\hat{\beta}_{N}^{\top }\Sigma \beta _{\ast }}{\sqrt{\hat{%
\beta}_{N}^{\top }\Sigma \hat{\beta}_{N}}}Z+\beta _{N}^{\top }\mu \right)
\right\vert Z\right] +O\left( \frac{1}{N}\right) .
\end{equation*}
Therefore, the actual average probability is%
\begin{equation*}
\mathbb{E}\textbf{}\left[ \phi \left( \beta _{\ast }^{\top }X\right) |\hat{\beta}%
_{N}^{\top }X\geq q_{1-\alpha }(\hat{\beta}_{N}^{\top }X|\hat{\beta}_N)\right] =\mathbb{E}%
\left[ \left. \phi \left( \frac{\hat{\beta}_{N}^{\top }\Sigma \beta _{\ast }%
}{\sqrt{\hat{\beta}_{N}^{\top }\Sigma \hat{\beta}_{N}}}Z+\beta _{N}^{\top
}\mu \right) \right\vert Z\geq q_{1-\alpha }(Z)\right] .
\end{equation*}

Let
\begin{equation*}
h(t)=\mathbb{E}\left[ \phi \left( \beta _N^{\top }\mu +tZ\right)
|Z\geq q_{1-\alpha }(Z)\right] ,
\end{equation*}%
and
\begin{equation*}
h^{\prime }(t)=\mathbb{E}\left[ \phi ^{\prime }\left( \beta _N^{\top
}\mu +tZ\right) Z|Z\geq q_{1-\alpha }(Z)\right].
\end{equation*}%
 Then, by using the Taylor expansion again,  the maximization bias is
\begin{eqnarray*}
&&\mathbb{E}%
\left[ \phi \left( \hat{\beta}_{N}^{\top }X\right) |\hat{\beta}^{\top }_NX\geq
q_{1-\alpha }(\hat{\beta}_{N}^{\top }X|\hat{\beta}_N)\right]-\mathbb{E}\left[ \phi \left( \beta _{\ast }^{\top }X\right) |\hat{\beta}%
_{N}^{\top }X\geq q_{1-\alpha }(\hat{\beta}_{N}^{\top }X|\hat{\beta}_N)\right]   \\
&=&\mathbb{E}\left[ \phi \left( \hat{\beta}_{N}^{\top }\mu +\sqrt{\hat{\beta}%
_{N}^{\top }\Sigma \hat{\beta}_{N}}Z\right) |Z\geq q_{1-\alpha }(Z)\right]  \\ && \quad -
\mathbb{E}\left[ \left. \phi \left( \frac{\hat{\beta}_{N}^{\top }\Sigma
\beta _{\ast }}{\sqrt{\hat{\beta}_{N}^{\top }\Sigma \hat{\beta}_{N}}}Z+\beta
_{N}^{\top }\mu \right) \right\vert Z\geq q_{1-\alpha }(Z)\right] +O\left(
\frac{1}{N}\right)  \\
&=&\mathbb{E}\left[ \int_{\frac{\hat{\beta}_{N}^{\top }\Sigma \beta _{\ast }%
}{\sqrt{\hat{\beta}_{N}^{\top }\Sigma \hat{\beta}_{N}}}}^{\sqrt{\hat{\beta}%
_{N}^{\top }\Sigma \hat{\beta}_{N}}}h^{\prime }(t)\mathrm{d}t\right]
+O\left( \frac{1}{N}\right) .
\end{eqnarray*}
\begin{proof}[Proof of Corollary \ref{cor:lambda}]
Conditional
on $\hat{\beta}_{N},$ we have $\lambda \hat{\beta}_{N}^{\top }X + (1-\lambda)\hat{\beta}_{N}^\top\mu \sim
\mathcal{N}\left( \hat{\beta}_{N}^{\top }\mu ,\lambda^2\hat{\beta}_{N}^{\top }\Sigma
\hat{\beta}_{N}\right) $. Then, we have
\begin{eqnarray*}
&&\mathbb{E}_{\mathcal{D}_\mathrm{test}}\left[ \phi \left( \lambda \hat{\beta}_{N}^{\top }X + (1-\lambda)\hat{\beta}_{N}^\top\mu\right) |\hat{\beta}_N%
^{\top }X\geq q_{1-\alpha }(\hat{\beta}_{N}^{\top }X|\hat{\beta}_N),\hat{\beta}_N\right] \\
&=&%
\mathbb{E}\left[ \phi \left( \hat{\beta}_{N}^{\top }\mu +\lambda \sqrt{\hat{\beta}%
_{N}^{\top }\Sigma \hat{\beta}_{N}}Z\right) |Z\geq q_{1-\alpha }(Z),\hat{%
\beta}_{N}\right].
\end{eqnarray*}%
The remaining proof follows similar lines with the proof of Theorem \ref{thm:main}.
\end{proof}

\begin{proof}[Proof of Equation \ref{eq:decomposition}] Note that $\mathbb{E}\left[ \hat{\beta}_{N}\right] =\beta _{\ast },$ we have
\begin{eqnarray*}
&\mathbb{E}[\hat{\beta}_{N}^{\top }\Sigma \hat{\beta}_{N}]&=\mathbb{E}[\hat{%
\beta}_{N}^{\top }\Sigma \beta _{\ast }]+\mathbb{E}[\hat{\beta}_{N}^{\top
}\Sigma (\hat{\beta}_{N}-\beta _{\ast })] \\
&=&\mathbb{E}[\hat{\beta}_{N}^{\top }\Sigma \beta _{\ast }]+\mathbb{E}%
[\left( \hat{\beta}_{N}^{\top }-\beta _{\ast }\right) \Sigma (\hat{\beta}%
_{N}-\beta _{\ast })].
\end{eqnarray*}%
Then, the second term on the right hand size is equivalent to
\begin{eqnarray*}
\mathbb{E}[\left( \hat{\beta}_{N}^{\top }-\beta _{\ast }\right) \Sigma (\hat{%
\beta}_{N}-\beta _{\ast })] &=&\mathbb{E}[\left( \hat{\beta}_{N}^{\top
}-\beta _{\ast }\right) ^{\top }(X-\mu )(X-\mu )^{\top }(\hat{\beta}%
_{N}-\beta _{\ast })] \\
&=&\mathbb{E}\left[ \left( [(X-\mu )^{\top }(\hat{\beta}_{N}-\beta _{\ast
})\right) ^{2}\right] .
\end{eqnarray*}%
By taking expectation conditional on $X,$ we have
\[
\mathbb{E}_{\mathcal{D}_{\mathrm{train}}}\left[ \left( [(X-\mu )^{\top }(%
\hat{\beta}_{N}-\beta _{\ast })\right) ^{2}|X\right] =\mathrm{Var}_{\mathcal{%
D}_{\mathrm{train}}}\left( (X-\mu )^{\top }\hat{\beta}_{N}|X\right) ,
\]%
which is because $\mathbb{E}\left[ \hat{\beta}_{N}\right] =\beta _{\ast }.$
By the tower property, we have the desired result.
\end{proof}

\begin{lemma}
We assume
  \begin{enumerate}
\item 
$\phi ^{\prime }(x)\leq C$ for $x\in \mathbb{R}$ and $\phi ^{\prime }(x)\geq
c_{0}>0$ for $x\in \lbrack l,r].$
   \item 
$\mathbb{P}\left( \beta _{N}^{\top }\mu \in \left[ \mu _{l},\mu _{r}\right]
\right) \geq c_{1}>0$ and $l\leq \mu _{l}+t_{l}q_{1-\alpha }(Z)\leq \mu
_{r}+2t_{r}q_{1-\alpha }(Z)\leq r$ for some $t_{l}<t_{r}.$
  \end{enumerate}
Then, we have for $t\in \left[ t_{l},t_{r}\right] $ and $\alpha \leq 0.2,$  
\[
\frac{c_{0}c_{1}}{2}\mathbb{E}\left[ Z|Z\geq q_{1-\alpha }(Z)\right] \leq
h^{\prime }(t)\leq C\mathbb{E}\left[ Z|Z\geq q_{1-\alpha }(Z)\right] .
\]
\label{lma:same_order}
\end{lemma}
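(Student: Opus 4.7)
The plan is to prove the two inequalities separately, exploiting the fact that in the setup of Theorem \ref{thm:main} the Gaussian variable $Z$ is by construction $\mathcal{N}(0,1)$ and \emph{independent} of $\hat{\beta}_N$ (the proof of Theorem \ref{thm:main} writes $\hat{\beta}_N^\top X = \hat{\beta}_N^\top\mu + \sqrt{\hat{\beta}_N^\top\Sigma\hat{\beta}_N}\,Z$ conditional on $\hat{\beta}_N$, and training and test data are independent). A first useful observation is that $\alpha \leq 0.2$ forces $q_{1-\alpha}(Z) \geq \Phi^{-1}(0.8) > 0$, so the conditioning event $\{Z \geq q_{1-\alpha}(Z)\}$ sits inside $\{Z > 0\}$.

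The upper bound is immediate: combining $\phi'(x) \leq C$ for all $x$ with $Z \geq 0$ on the conditioning event yields
\[
h'(t) = \mathbb{E}\bigl[\phi'(\beta_N^\top\mu + tZ)\, Z \mid Z \geq q_{1-\alpha}(Z)\bigr] \leq C\,\mathbb{E}\bigl[Z \mid Z \geq q_{1-\alpha}(Z)\bigr].
\]

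For the lower bound, set $q := q_{1-\alpha}(Z)$ and restrict the expectation to $A \cap B$, where $A = \{\beta_N^\top\mu \in [\mu_l,\mu_r]\}$ depends only on the training data and $B = \{q \leq Z \leq 2q\}$ depends only on $Z$. On $A \cap B$, assumption 2 gives $l \leq \mu_l + t_l q \leq \beta_N^\top\mu + tZ \leq \mu_r + 2t_r q \leq r$, so assumption 1 yields $\phi'(\beta_N^\top\mu + tZ) \geq c_0$. Because the integrand is nonnegative (the monotonicity of $\phi$ inherited from Theorem \ref{thm:main} gives $\phi' \geq 0$), and because $\mathbb{1}_A$ is a function of $\hat{\beta}_N$ alone while $\mathbb{1}_B$ and $Z$ are functions of $Z$ alone, independence lets us factor
\[
h'(t) \geq c_0\,\mathbb{E}\bigl[Z\,\mathbb{1}_A\,\mathbb{1}_B \mid Z \geq q\bigr] = c_0\,\mathbb{P}(A)\,\mathbb{E}\bigl[Z\,\mathbb{1}_B \mid Z \geq q\bigr] \geq c_0 c_1\,\mathbb{E}\bigl[Z\,\mathbb{1}_B \mid Z \geq q\bigr].
\]

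The main obstacle, and the only non-soft step, is proving $\mathbb{E}[Z\,\mathbb{1}_B \mid Z \geq q] \geq \tfrac{1}{2}\mathbb{E}[Z \mid Z \geq q]$, i.e.\ that the tail past $2q$ contributes at most half the mass of the tail past $q$. I would close this with the standard identity $\mathbb{E}[Z\,\mathbb{1}_{Z \geq a}] = \varphi(a)$ for the standard normal density $\varphi$, which reduces the claim to
\[
1 - \frac{\varphi(2q)}{\varphi(q)} = 1 - \exp\!\left(-\tfrac{3}{2}q^2\right) \geq \tfrac{1}{2},
\]
equivalently $q^2 \geq \tfrac{2}{3}\ln 2 \approx 0.462$. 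Since $\alpha \leq 0.2$ forces $q \geq \Phi^{-1}(0.8) \approx 0.842$ and hence $q^2 \geq 0.708$, the inequality holds and the lower bound $\tfrac{c_0 c_1}{2}\mathbb{E}[Z \mid Z \geq q]$ follows.
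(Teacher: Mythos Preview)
Your proof is correct and follows essentially the same route as the paper: same upper bound via $\phi'\le C$ (you are more careful in noting $Z>0$ on the conditioning event), same restriction to $\{\beta_N^\top\mu\in[\mu_l,\mu_r]\}\cap\{Z\in[q,2q]\}$ for the lower bound, same factorization by independence. The one difference is that the paper simply asserts $\mathbb{E}[Z\,\mathbb{1}_{\{q\le Z\le 2q\}}\mid Z\ge q]\ge \tfrac12\mathbb{E}[Z\mid Z\ge q]$ from $\alpha\le 0.2$ without justification, whereas you actually verify it via $\mathbb{E}[Z\,\mathbb{1}_{Z\ge a}]=\varphi(a)$ and the resulting bound $1-e^{-3q^2/2}\ge 1/2$; this is a genuine improvement in completeness.
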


\begin{proof}
The upper bound is immediate as $\phi ^{\prime }\left( \beta _{N}^{\top }\mu
+tZ\right) \leq C$ and  
\begin{eqnarray*}
h^{\prime }(t) &=&\mathbb{E}\left[ \phi ^{\prime }\left( \beta _{N}^{\top
}\mu +tZ\right) Z|Z\geq q_{1-\alpha }(Z)\right]  \\
&\leq &C\mathbb{E}\left[ Z|Z\geq q_{1-\alpha }(Z)\right] .
\end{eqnarray*}%
Now, we focus on the lower bound. 
\begin{eqnarray*}
h^{\prime }(t) 
&\geq& \mathbb{E}\left[ \phi ^{\prime }\left( \beta _{N}^{\top }\mu
+tZ\right) Z\mathbb{I}\{\beta _{N}^{\top }\mu \in \left[ \mu _{l},\mu _{r}%
\right] \}|Z\geq q_{1-\alpha }(Z)\right]  \\
&\geq &\mathbb{E}\left[ \phi ^{\prime }\left( \beta _{N}^{\top }\mu
+tZ\right) Z\mathbb{I}\{\beta _{N}^{\top }\mu \in \left[ \mu _{l},\mu _{r}%
\right] ,Z\in \lbrack q_{1-\alpha }(Z),2q_{1-\alpha }(Z)]\}|Z\geq
q_{1-\alpha }(Z)\right] .
\end{eqnarray*}%
Note that when $$\beta _{N}^{\top }\mu \in \left[ \mu _{l},\mu _{r}\right]
,t\in \left[ t_{l},t_{r}\right] ,Z\in \lbrack q_{1-\alpha }(Z),2q_{1-\alpha
}(Z)],$$ we have   
\[
\phi ^{\prime }\left( \beta _{N}^{\top }\mu +tZ\right) >c_{0}.
\]%
Then, $h^{\prime }(t)$ is lower bounded by 
\begin{eqnarray*}
&&h^{\prime }(t) \\
&\geq &c_{0}\mathbb{E}\left[ Z\mathbb{I}\{\beta _{N}^{\top }\mu \in \left[
\mu _{l},\mu _{r}\right] ,Z\in \lbrack q_{1-\alpha }(Z),2q_{1-\alpha
}(Z)]\}|Z\geq q_{1-\alpha }(Z)\right]  \\
&\geq &c_{0}c_{1}\mathbb{E}\left[ Z\mathbb{I}\{Z\in \lbrack q_{1-\alpha
}(Z),2q_{1-\alpha }(Z)]\}|Z\geq q_{1-\alpha }(Z)\right] .
\end{eqnarray*}%
Since $\alpha \leq 0.2,$ we have 
\begin{eqnarray*}
&&\mathbb{E}\left[ Z\mathbb{I}\{Z\in \lbrack q_{1-\alpha }(Z),2q_{1-\alpha
}(Z)]\}|Z\geq q_{1-\alpha }(Z)\right]  \\
&\geq &\frac{1}{2}\mathbb{E}\left[ Z|Z\geq q_{1-\alpha }(Z)\right] ,
\end{eqnarray*}%
which yields the desired lower bound.
\end{proof}

\section{Numerical Results}
\label{appendix:numericals}
\subsection{Synthetic Data}
\label{appendix:syn}
In this section, we report additional results on synthetic dataset. We plot average calibration errors, average ECE and average MCE for $\alpha \in [2\%,10\%]$ in Figure \ref{fig:out_of_distribution_kaggle_lr}. MCE is defined as 
 \begin{equation}
   \mathrm{ MCE} \triangleq \max_{m \in \{1,...,M\}}  \left|\frac{\sum_{k \in B_m} y_k }{|B_m|} - \frac{\sum_{k \in B_m}f(x_k)  }{|B_m|} \right|,
    \end{equation}
    .We test methods with different hyperparameters. Specifically, we test our method with $S=3$, and with identity mapping $\phi(x) = x$ (denoted as VAD(p)). The results are summarized in Tables \ref{appendix_table:CE_lr}, \ref{appendix_table:ECE_lr} and \ref{appendix_table:MCE_lr}.
We find that for the VAD method, $S=3$ outperforms $S=2$, with the cost of more training resources needed. VAD(p) and VAD have similar performances.

\begin{figure}[!ht]
	\centering
	%%\captionsetup{belowskip=-10pt}
	\subfigure[Calibration Error]{
		\label{fig:calibration_kaggle)lr} \includegraphics[width=4.5cm]{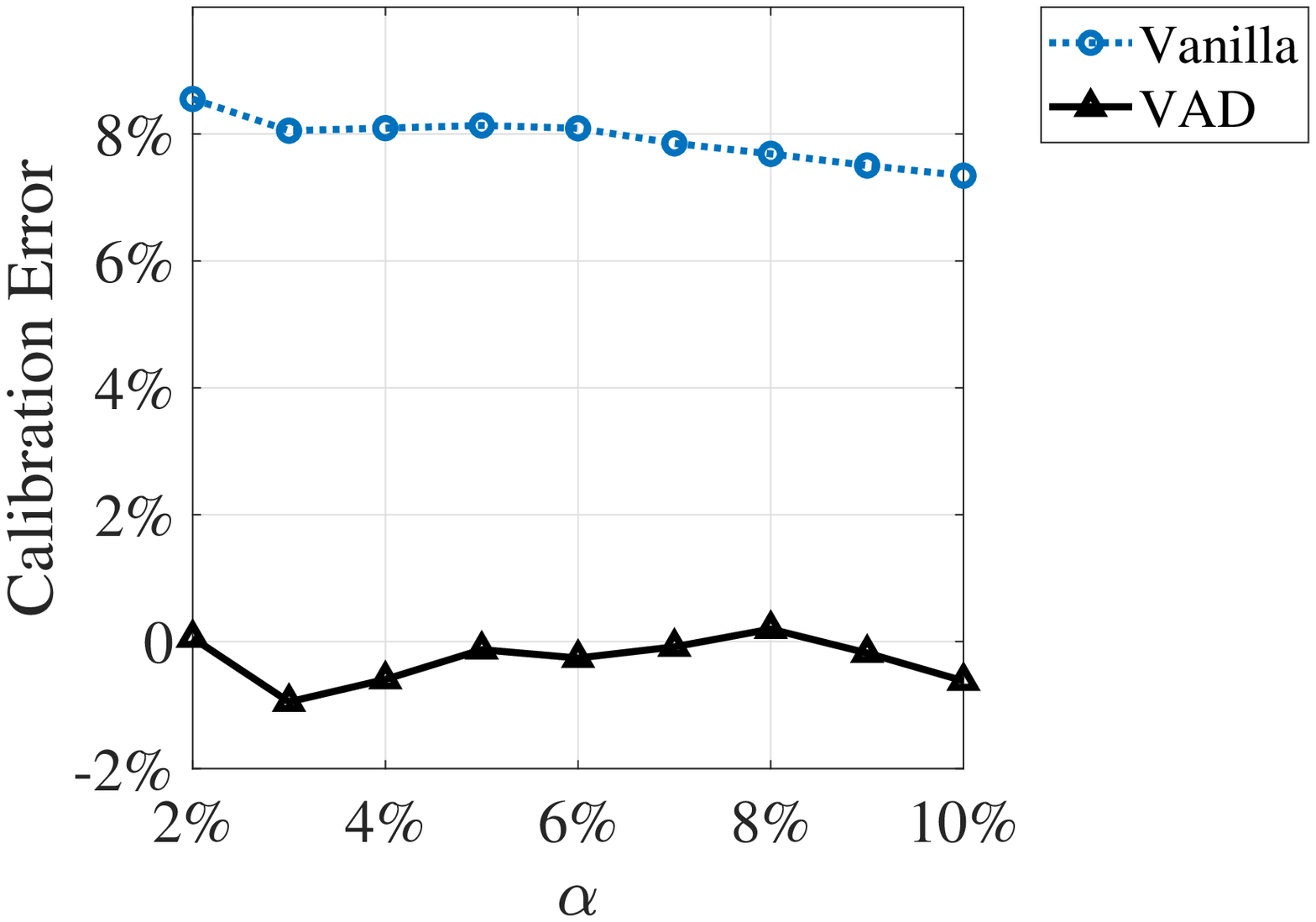}}
	\subfigure[ECE]{
		\label{fig:ECE_kaggle_lr} \includegraphics[width=4.5cm]{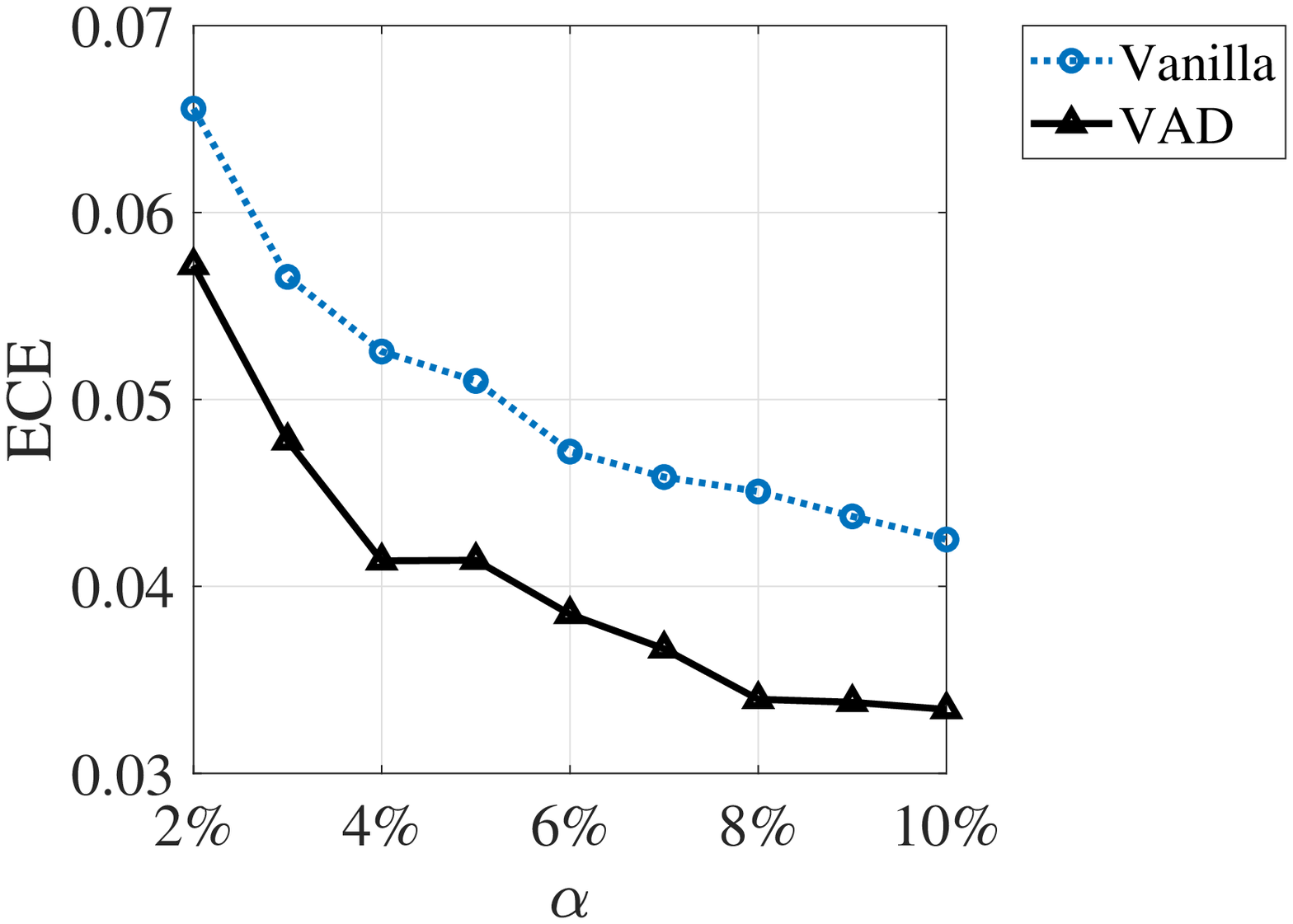}}
		\subfigure[MCE]{
		\label{fig:MCE_kaggle_lr} \includegraphics[width=4.5cm]{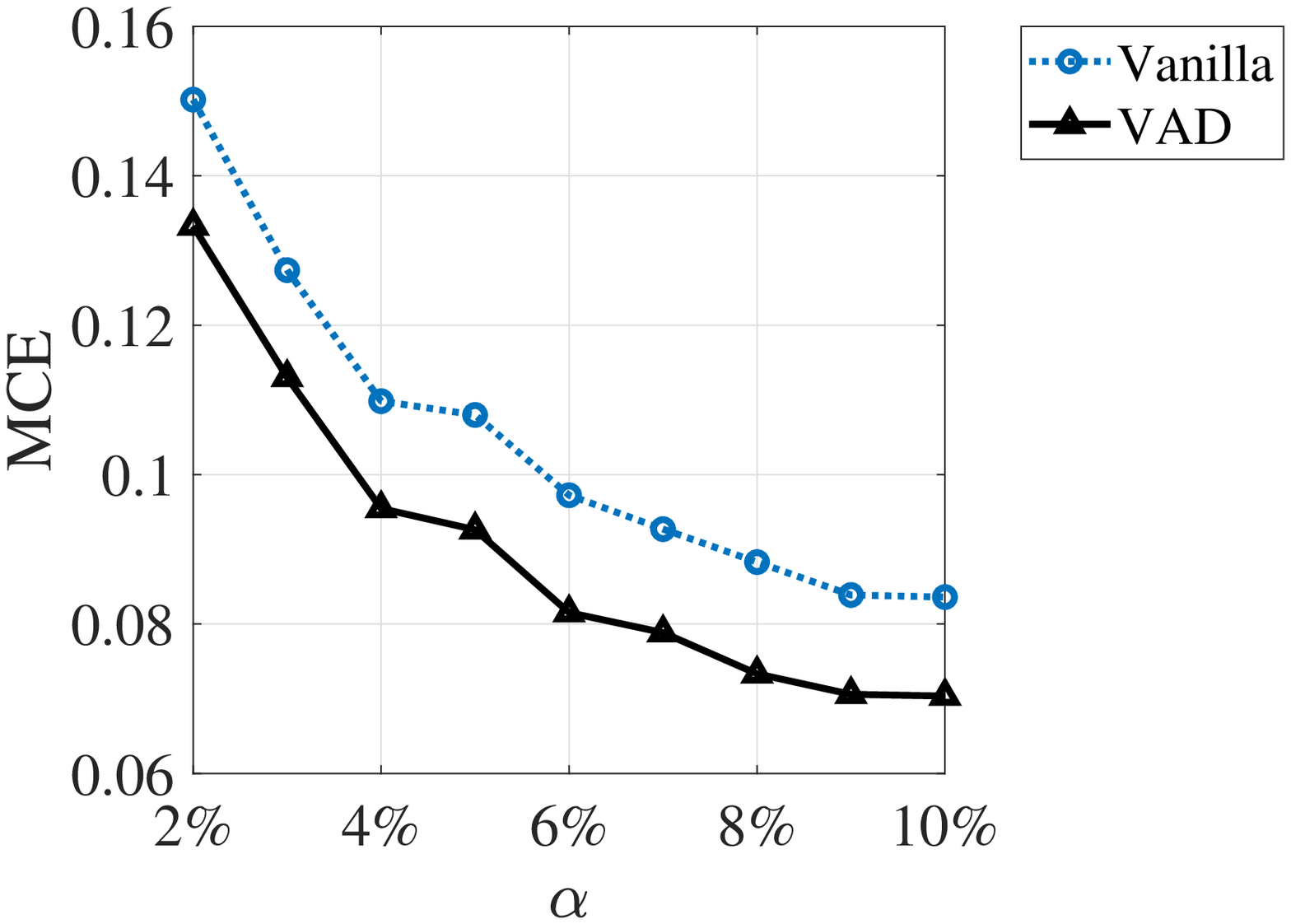}}
	%%\subfigure[$n=50$ true posterior distribution]{
	%%\label{exp:3} \includegraphics[width=5cm]{50trialsnormaldensity.eps}}
	\caption{Average calibration errors, ECE, and MCE on the synthetic data}
	\label{fig:out_of_distribution_kaggle_lr}
\end{figure}

Tables \ref{appendix_table:log_loss_lr} reports the Log Loss reduction on the selection set. Log Loss is defined as
$$\mathrm{LogLoss}(\tilde{\mathcal{D}}_\mathrm{test}^\alpha, f) = - \frac{1}{|\tilde{\mathcal{D}}^\alpha_\mathrm{test}|}\sum_{i \in \tilde{\mathcal{D}}_\mathrm{test}^\alpha} y_i \log(f(x_i)) + (1-y_i) \log(1-f(x_i)).$$
Then, the Log Loss reduction is defined by
$$
\frac{\mathrm{LogLoss}(\tilde{\mathcal{D}}^\alpha_\mathrm{test}, f_\mathrm{VAD})} {\mathrm{LogLoss}(\tilde{\mathcal{D}}^\alpha_\mathrm{test}, f_\mathrm{vanilla}) }-1.
$$
Negative Log Loss reduction means that we achieve lower loss. We find that after applying our method, we achieve lower Log Loss, meaning that we improve the prediction quality.

% Table generated by Excel2LaTeX from sheet 'table_calibration_lr'
\begin{table}[htbp]
  \centering
  \caption{Average and standard errors of calibration errors on synthetic data}
    \begin{tabular}{lcccc}
    \toprule
       $ \alpha$  &  Vanilla  &  VAD  &  VAD(p)  &  VAD (S=3)  \\
       \midrule
     $\alpha$=2\%  & 8.55\%$\pm$0.68\%  & \textbf{0.06\%}$\pm$0.72\%  & 1.00\%$\pm$0.70\%  & -0.34\%$\pm$0.63\%  \\
     $\alpha$=3\%  & 8.05\%$\pm$0.70\%  & -0.95\%$\pm$0.69\%  & \textbf{0.28\%}$\pm$0.68\%  & -0.58\%$\pm$0.64\%  \\
     $\alpha$=4\%  & 8.09\%$\pm$0.70\%  & -0.60\%$\pm$0.67\%  & 0.40\%$\pm$0.65\%  & \textbf{-0.23\%}$\pm$0.67\%  \\
     $\alpha$=5\%  & 8.13\%$\pm$0.69\%  & -0.13\%$\pm$0.71\%  & 0.95\%$\pm$0.69\%  & \textbf{-0.10\%}$\pm$0.66\%  \\
     $\alpha$=6\%  & 8.09\%$\pm$0.69\%  & -0.26\%$\pm$0.71\%  & 0.75\%$\pm$0.71\%  & \textbf{-0.10\%}$\pm$0.67\%  \\
     $\alpha$=7\%  & 7.85\%$\pm$0.71\%  & \textbf{-0.08\%}$\pm$0.73\%  & 1.11\%$\pm$0.71\%  & -0.13\%$\pm$0.70\%  \\
     $\alpha$=8\%  & 7.69\%$\pm$0.72\%  & \textbf{0.20\%}$\pm$0.66\%  & 1.12\%$\pm$0.66\%  & -0.24\%$\pm$0.67\%  \\
     $\alpha$=9\%  & 7.50\%$\pm$0.74\%  & -0.18\%$\pm$0.71\%  & 0.94\%$\pm$0.71\%  & \textbf{-0.10\%}$\pm$0.71\%  \\
     $\alpha$=10\%  & 7.34\%$\pm$0.75\%  & -0.62\%$\pm$0.73\%  & 0.60\%$\pm$0.72\%  & \textbf{-0.08\%}$\pm$0.70\%  \\
     \bottomrule
    \end{tabular}%
  \label{appendix_table:CE_lr}%
\end{table}%

% \vskip -0.2in

% Table generated by Excel2LaTeX from sheet 'table_ECE_lr'
\begin{table}[htbp]
  \centering
  \caption{Average and standard errors of ECE on synthetic data}
    \begin{tabular}{lcccc}
    \toprule
       $\alpha$   &  Vanilla  &  VAD  &  VAD(p)  &  VAD (S=3)  \\
       \midrule
     $\alpha$=2\%  & 0.0656$\pm$0.0019  & 0.0572$\pm$0.0013  & 0.0566$\pm$0.0013  & \textbf{0.0555}$\pm$0.0013  \\
     $\alpha$=3\%  & 0.0566$\pm$0.0020  & 0.0478$\pm$0.0013  & 0.0474$\pm$0.0013  & \textbf{0.0463}$\pm$0.0013  \\
     $\alpha$=4\%  & 0.0526$\pm$0.0021  & 0.0414$\pm$0.0014  & \textbf{0.0409}$\pm$0.0013  & 0.0416$\pm$0.0012  \\
     $\alpha$=5\%  & 0.0510$\pm$0.0020  & 0.0414$\pm$0.0013  & 0.0410$\pm$0.0013  & \textbf{0.0404}$\pm$0.0012  \\
     $\alpha$=6\%  & 0.0472$\pm$0.0021  & 0.0385$\pm$0.0014  & 0.0384$\pm$0.0013  & \textbf{0.0369}$\pm$0.0013  \\
     $\alpha$=7\%  & 0.0459$\pm$0.0021  & 0.0367$\pm$0.0014  & 0.0363$\pm$0.0014  & \textbf{0.0358}$\pm$0.0013  \\
     $\alpha$=8\%  & 0.0451$\pm$0.0020  & 0.0340$\pm$0.0012  & 0.0341$\pm$0.0013  & \textbf{0.0339}$\pm$0.0013  \\
     $\alpha$=9\%  & 0.0437$\pm$0.0021  & \textbf{0.0338}$\pm$0.0014  & \textbf{0.0338}$\pm$0.0014  & \textbf{0.0338}$\pm$0.0014  \\
     $\alpha$=10\%  & 0.0425$\pm$0.0021  & 0.0334$\pm$0.0015  & 0.0331$\pm$0.0014  & \textbf{0.0326}$\pm$0.0013  \\
     \bottomrule
    \end{tabular}%
  \label{appendix_table:ECE_lr}%
\end{table}%

% \vskip -0.2in

\begin{table}[H]
  \centering
  \caption{Average and standard errors of MCE on synthetic data}
    \begin{tabular}{lcccc}
    \toprule
       $\alpha$   &  Vanilla  &  VAD  &  VAD(p)  &  VAD (S=3)  \\
       \midrule
    $\alpha$=2\%  & 0.1502$\pm$0.0041  & 0.1333$\pm$0.0035  & 0.1327$\pm$0.0035  & \textbf{0.1300}$\pm$0.0034    \\
     $\alpha$=3\%  & 0.1274$\pm$0.0041  & 0.1130$\pm$0.0031  & \textbf{0.1118}$\pm$0.0032  & 0.1119$\pm$0.0031    \\
     $\alpha$=4\%  & 0.1098$\pm$0.0033  & 0.0955$\pm$0.0028  & \textbf{0.0944}$\pm$0.0028  & 0.0965$\pm$0.0028    \\
     $\alpha$=5\%  & 0.1080$\pm$0.0033  & 0.0926$\pm$0.0026  & \textbf{0.0915}$\pm$0.0026  & 0.0916$\pm$0.0025    \\
     $\alpha$=6\%  & 0.0972$\pm$0.0030  & 0.0815$\pm$0.0025  & 0.0818$\pm$0.0024  & \textbf{0.0810}$\pm$0.0024   \\
     $\alpha$=7\%  & 0.0927$\pm$0.0029  & 0.0789$\pm$0.0025  & \textbf{0.0784}$\pm$0.0024  & 0.0793$\pm$0.0024    \\
     $\alpha$=8\%  & 0.0883$\pm$0.0028  & 0.0733$\pm$0.0023  & \textbf{0.0731}$\pm$0.0023  & 0.0743$\pm$0.0024    \\
     $\alpha$=9\%  & 0.0839$\pm$0.0030  & 0.0706$\pm$0.0023  & 0.0699$\pm$0.0024  & \textbf{0.0698}$\pm$0.0023   \\
     $\alpha$=10\%  & 0.0836$\pm$0.0027  & 0.0704$\pm$0.0022  & 0.0698$\pm$0.0022  & \textbf{0.0690}$\pm$0.0021    \\
     \bottomrule
    \end{tabular}%
  \label{appendix_table:MCE_lr}%
\end{table}%
% Table generated by Excel2LaTeX from sheet 'Sheet1'
% \vskip -0.2in

\begin{table}[H]
  \centering
  \caption{Log loss reduction on synthetic data  by applying VAD}
    \begin{tabular}{lccccccccc}
    \toprule
  $\alpha$ & 2\%     & 3\%     & 4\%     & 5\%     & 6\%     & 7\%     & 8\%    & 9\%     & 10\% \\
  \midrule
    &-0.45\% & -0.42\% & -0.42\% & -0.37\% & -0.35\% & -0.33\% & -0.37\% & -0.32\% & -0.30\% \\
    \bottomrule
    \end{tabular}%
  \label{appendix_table:log_loss_lr}%
\end{table}%

\subsection{Criteo Ad Kaggle Dataset}
\label{appendix:real}
In this section, we report additional results on the Criteo Ad Kaggle dataset. Similarly with Appendix \ref{appendix:syn}, we test more hyperparameters. Note that for the VAD method with $S=3$, we replicate 26 times. The results are summarized in Tables \ref{appendix_table:ce_nn}, \ref{appendix_table:ece_nn} and \ref{appendix_table:mce_nn}. We plot average MCE for $\alpha \in [2\%,10\%]$ in Figure \ref{fig:MCE_kaggle}.
We find that for the VAD method, $S=3$ outperforms $S=2$ slightly, with the cost of more training resources needed. VAD(p) and VAD have similar performances. In addition, Table \ref{appendix_table:log_loss_nn} reports the Log Loss reduction. We find that after applying our method, we achieve lower Log Loss uniformly, meaning that we improve the prediction quality.

\begin{figure}[H]
	\centering
	%%\captionsetup{belowskip=-10pt}

		 \includegraphics[width=8cm]{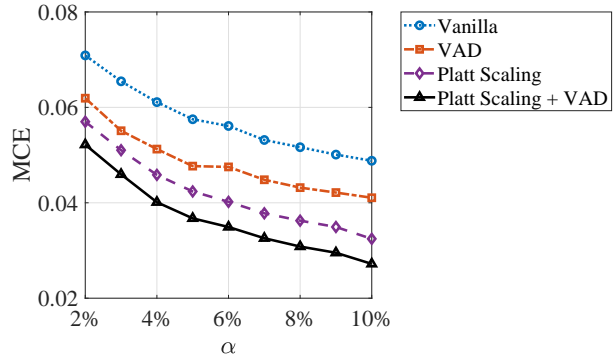}
	
	\caption{MCE the Criteo Ad Kaggle dataset}
	\label{fig:MCE_kaggle}
\end{figure}

\begin{table}[htbp]
  \centering
  \caption{Average and standard errors of calibration errors on Criteo Ad Kaggle dataset}
    \begin{tabular}{llcccc}
    \toprule
    $\alpha$ &  Methods     &  Original  &  VAD+  &  VAD(p)+  &  VAD+ (S=3)   \\
     \midrule

    \multirow{5}[0]{*}{3\%} &  Vanilla  & 3.53\%$\pm$0.45\%  & \textbf{-0.38}\%$\pm$0.50\%  & -0.54\%$\pm$0.46\%  & -0.72\%$\pm$0.61\%  \\
          &  Histogram Binning  & 2.19\%$\pm$0.06\%  & 1.40\%$\pm$0.06\%  & 1.23\%$\pm$0.06\%  & 1.40\%$\pm$0.07\%  \\
          &  Platt Scaling  & 1.77\%$\pm$0.12\%  & 0.98\%$\pm$0.11\%  & 0.81\%$\pm$0.11\%  & 1.00\%$\pm$0.13\%  \\
          &  Scaling-Binning  & 2.27\%$\pm$0.12\%  & 1.48\%$\pm$0.11\%  & 1.30\%$\pm$0.11\%  & 1.50\%$\pm$0.14\%  \\
          &  Isotonic Regression  & 1.86\%$\pm$0.05\%  & 1.06\%$\pm$0.05\%  & 0.89\%$\pm$0.06\%  & 1.06\%$\pm$0.05\%  \\
     \midrule\multirow{5}[0]{*}{5\%} &  Vanilla  & 3.82\%$\pm$0.52\%  & -0.36\%$\pm$0.56\%  & \textbf{-0.14\%}$\pm$0.53\%  & -0.76\%$\pm$0.69\%  \\
          &  Histogram Binning  & 2.03\%$\pm$0.06\%  & 1.19\%$\pm$0.05\%  & 1.10\%$\pm$0.06\%  & 1.18\%$\pm$0.06\%  \\
          &  Platt Scaling  & 1.88\%$\pm$0.10\%  & 1.03\%$\pm$0.10\%  & 0.95\%$\pm$0.10\%  & 1.06\%$\pm$0.12\%  \\
          &  Scaling-Binning  & 2.22\%$\pm$0.10\%  & 1.37\%$\pm$0.10\%  & 1.28\%$\pm$0.10\%  & 1.39\%$\pm$0.12\%  \\
          &  Isotonic Regression  & 1.79\%$\pm$0.05\%  & 0.94\%$\pm$0.05\%  & 0.86\%$\pm$0.05\%  & 0.95\%$\pm$0.05\%  \\
    \midrule \multirow{5}[0]{*}{7\%} &  Vanilla  & 3.90\%$\pm$0.57\%  & -0.42\%$\pm$0.61\%  & \textbf{0.04}\%$\pm$0.57\%  & -0.90\%$\pm$0.74\%  \\
          &  Histogram Binning  & 1.92\%$\pm$0.03\%  & 1.05\%$\pm$0.04\%  & 1.02\%$\pm$0.04\%  & 1.03\%$\pm$0.05\%  \\
          &  Platt Scaling  & 1.86\%$\pm$0.08\%  & 0.99\%$\pm$0.08\%  & 0.95\%$\pm$0.08\%  & 0.97\%$\pm$0.10\%  \\
          &  Scaling-Binning  & 2.13\%$\pm$0.09\%  & 1.26\%$\pm$0.08\%  & 1.22\%$\pm$0.08\%  & 1.24\%$\pm$0.10\%  \\
          &  Isotonic Regression  & 1.72\%$\pm$0.04\%  & 0.84\%$\pm$0.04\%  & 0.81\%$\pm$0.04\%  & 0.81\%$\pm$0.05\%  \\
     \midrule\multirow{5}[0]{*}{9\%} &  Vanilla  & 3.93\%$\pm$0.61\%  & -0.46\%$\pm$0.64\%  & \textbf{0.16}\%$\pm$0.61\%  & -1.00\%$\pm$0.79\%  \\
          &  Histogram Binning  & 1.89\%$\pm$0.03\%  & 1.00\%$\pm$0.04\%  & 1.01\%$\pm$0.04\%  & 0.95\%$\pm$0.05\%  \\
          &  Platt Scaling  & 1.82\%$\pm$0.08\%  & 0.93\%$\pm$0.07\%  & 0.93\%$\pm$0.08\%  & 0.89\%$\pm$0.10\%  \\
          &  Scaling-Binning  & 2.04\%$\pm$0.08\%  & 1.15\%$\pm$0.07\%  & 1.15\%$\pm$0.08\%  & 1.11\%$\pm$0.10\%  \\
          &  Isotonic Regression  & 1.74\%$\pm$0.03\%  & 0.84\%$\pm$0.03\%  & 0.85\%$\pm$0.04\%  & 0.79\%$\pm$0.05\%  \\

          \bottomrule
    \end{tabular}%
  \label{appendix_table:ce_nn}%
\end{table}%

\begin{table}[!ht]
  \centering
  \caption{Average and standard errors of ECE on Criteo Ad Kaggle dataset}
    \begin{tabular}{llcccc}
    \toprule
    $\alpha$ &  Methods     &  Original  &  VAD+  &  VAD(p)+  &  VAD+ (S=3)   \\
     \midrule

    \multirow{5}[0]{*}{3\%} &  Vanilla  & 0.0290$\pm$0.0026  & 0.0218$\pm$0.0018  & 0.0219$\pm$0.0015  & 0.0223$\pm$0.0020   \\
          &  Histogram Binning  & 0.0190$\pm$0.0003  & 0.0160$\pm$0.0003  & 0.0154$\pm$0.0003  & 0.0159$\pm$0.0003   \\
          &  Platt Scaling  & 0.0169$\pm$0.0005  & 0.0144$\pm$0.0004  & \textbf{0.0142}$\pm$0.0003  & 0.0145$\pm$0.0004   \\
          &  Scaling-Binning  & 0.0192$\pm$0.0006  & 0.0161$\pm$0.0005  & 0.0153$\pm$0.0004  & 0.0160$\pm$0.0006   \\
          &  Isotonic Regression  & 0.0181$\pm$0.0003  & 0.0157$\pm$0.0003  & 0.0155$\pm$0.0003  & 0.0155$\pm$0.0003   \\
    \midrule \multirow{5}[0]{*}{5\%} &  Vanilla  & 0.0287$\pm$0.0028  & 0.0213$\pm$0.0020  & 0.0211$\pm$0.0017  & 0.0217$\pm$0.0024   \\
          &  Histogram Binning  & 0.0163$\pm$0.0003  & 0.0133$\pm$0.0002  & 0.0128$\pm$0.0002  & 0.0132$\pm$0.0003   \\
          &  Platt Scaling  & 0.0151$\pm$0.0005  & 0.0121$\pm$0.0003  & 0.0120$\pm$0.0003  & \textbf{0.0119}$\pm$0.0004   \\
          &  Scaling-Binning  & 0.0166$\pm$0.0005  & 0.0132$\pm$0.0004  & 0.0128$\pm$0.0004  & 0.0133$\pm$0.0005   \\
          &  Isotonic Regression  & 0.0156$\pm$0.0003  & 0.0130$\pm$0.0003  & 0.0129$\pm$0.0003  & 0.0130$\pm$0.0003   \\
    \midrule\multirow{5}[0]{*}{7\%} &  Vanilla  & 0.0281$\pm$0.0029  & 0.0211$\pm$0.0021  & 0.0208$\pm$0.0019  & 0.0216$\pm$0.0025   \\
          &  Histogram Binning  & 0.0146$\pm$0.0002  & 0.0117$\pm$0.0002  & 0.0114$\pm$0.0002  & 0.0117$\pm$0.0003   \\
          &  Platt Scaling  & 0.0137$\pm$0.0004  & 0.0108$\pm$0.0003  & \textbf{0.0107}$\pm$0.0003  & \textbf{0.0107}$\pm$0.0003   \\
          &  Scaling-Binning  & 0.0149$\pm$0.0004  & 0.0116$\pm$0.0003  & 0.0114$\pm$0.0003  & 0.0118$\pm$0.0003   \\
          &  Isotonic Regression  & 0.0138$\pm$0.0002  & 0.0114$\pm$0.0002  & 0.0113$\pm$0.0002  & 0.0112$\pm$0.0003   \\
   \midrule \multirow{5}[0]{*}{9\%} &  Vanilla  & 0.0274$\pm$0.0029  & 0.0208$\pm$0.0022  & 0.0205$\pm$0.0020  & 0.0212$\pm$0.0026   \\
          &  Histogram Binning  & 0.0135$\pm$0.0002  & 0.0107$\pm$0.0002  & 0.0105$\pm$0.0002  & 0.0106$\pm$0.0002   \\
          &  Platt Scaling  & 0.0125$\pm$0.0004  & 0.0096$\pm$0.0002  & 0.0096$\pm$0.0003  & \textbf{0.0095}$\pm$0.0003   \\
          &  Scaling-Binning  & 0.0136$\pm$0.0004  & 0.0104$\pm$0.0002  & 0.0103$\pm$0.0003  & 0.0105$\pm$0.0003   \\
          &  Isotonic Regression  & 0.0129$\pm$0.0002  & 0.0104$\pm$0.0002  & 0.0104$\pm$0.0002  & 0.0104$\pm$0.0003   \\
          \bottomrule
    \end{tabular}%
  \label{appendix_table:ece_nn}%
\end{table}%

% Table generated by Excel2LaTeX from sheet 'table_MCE_nn'
\begin{table}[!ht]
  \centering
  \caption{Average and standard errors of MCE on Criteo Ad Kaggle dataset}
    \begin{tabular}{llcccc}
    \toprule
    $\alpha$ &  Methods     &  Original  &  VAD+  &  VAD(p)+  &  VAD+ (S=3)   \\
     \midrule
    \multirow{5}[0]{*}{  3\%} &  Vanilla  & 0.0655$\pm$0.0037  & 0.0551$\pm$0.0028  & 0.0561$\pm$0.0026  & 0.0547$\pm$0.0035   \\

          &  Histogram Binning  & 0.0524$\pm$0.0011  & 0.0467$\pm$0.0011  & 0.0459$\pm$0.0011  & \textbf{0.0448}$\pm$0.0012   \\
          &  Platt Scaling  & 0.0510$\pm$0.0015  & 0.0460$\pm$0.0013  & 0.0457$\pm$0.0013  & 0.0454$\pm$0.0017   \\
          &  Scaling-Binning  & 0.0513$\pm$0.0013  & 0.0464$\pm$0.0012  & 0.0451$\pm$0.0011  & 0.0462$\pm$0.0013   \\
          &  Isotonic Regression  & 0.0572$\pm$0.0016  & 0.0515$\pm$0.0015  & 0.0510$\pm$0.0016  & 0.0510$\pm$0.0018   \\ \midrule
    \multirow{5}[0]{*}{ 5\%} &  Vanilla  & 0.0575$\pm$0.0035  & 0.0477$\pm$0.0027  & 0.0486$\pm$0.0024  & 0.0469$\pm$0.0034   \\
          &  Histogram Binning  & 0.0475$\pm$0.0011  & 0.0420$\pm$0.0011  & 0.0412$\pm$0.0012  & 0.0417$\pm$0.0012   \\
          &  Platt Scaling  & 0.0424$\pm$0.0010  & 0.0367$\pm$0.0010  & 0.0367$\pm$0.0010  & \textbf{0.0363}$\pm$0.0012   \\
          &  Scaling-Binning  & 0.0451$\pm$0.0013  & 0.0398$\pm$0.0013  & 0.0388$\pm$0.0013  & 0.0396$\pm$0.0016   \\
          &  Isotonic Regression  & 0.0474$\pm$0.0016  & 0.0416$\pm$0.0015  & 0.0415$\pm$0.0015  & 0.0404$\pm$0.0012   \\ \midrule
    \multirow{5}[0]{*}{  7\%} &  Vanilla  & 0.0532$\pm$0.0035  & 0.0448$\pm$0.0026  & 0.0460$\pm$0.0024  & 0.0451$\pm$0.0031   \\
          &  Histogram Binning  & 0.0402$\pm$0.0009  & 0.0351$\pm$0.0009  & 0.0344$\pm$0.0010  & 0.0344$\pm$0.0012   \\
          &  Platt Scaling  & 0.0378$\pm$0.0010  & 0.0326$\pm$0.0009  & 0.0327$\pm$0.0010  & \textbf{0.0325}$\pm$0.0012   \\
          &  Scaling-Binning  & 0.0405$\pm$0.0011  & 0.0354$\pm$0.0011  & 0.0339$\pm$0.0010  & 0.0361$\pm$0.0016   \\
          &  Isotonic Regression  & 0.0420$\pm$0.0012  & 0.0366$\pm$0.0012  & 0.0364$\pm$0.0011  & 0.0357$\pm$0.0011   \\ \midrule
          \multirow{5}[0]{*}{  9\%} &  Vanilla  & 0.0501$\pm$0.0032  & 0.0421$\pm$0.0024  & 0.0432$\pm$0.0022  & 0.0422$\pm$0.0029    \\
          &  Histogram Binning  & 0.0363$\pm$0.0008  & 0.0310$\pm$0.0007  & 0.0309$\pm$0.0008  & 0.0296$\pm$0.0008    \\
          &  Platt Scaling  & 0.0349$\pm$0.0008  & \textbf{0.0295}$\pm$0.0008  & 0.0296$\pm$0.0008  & 0.0296$\pm$0.0009    \\
          &  Scaling-Binning  & 0.0376$\pm$0.0010  & 0.0327$\pm$0.0009  & 0.0312$\pm$0.0009  & 0.0333$\pm$0.0012    \\
          &  Isotonic Regression  & 0.0386$\pm$0.0010  & 0.0335$\pm$0.0010  & 0.0332$\pm$0.0009  & 0.0322$\pm$0.0009    \\
          \bottomrule
    \end{tabular}%
  \label{appendix_table:mce_nn}%
\end{table}%

% Table generated by Excel2LaTeX from sheet 'Sheet1'
\begin{table}[!ht]
  \centering
  \caption{Log loss reduction on Criteo Ad Kaggle dataset by applying VAD}
  \small
   \begin{tabular}{lccccccccc}
    \toprule
  Methods  & $\alpha$=2\%    & $\alpha$=3\%     &$\alpha$=4\%     & $\alpha$=5\%     & $\alpha$=6\%    & $\alpha$=7\%     & $\alpha$=8\%     & $\alpha$=9\%     & $\alpha$=10\% \\
\midrule  Vanilla      & -0.27\% & -0.26\% & -0.25\% & -0.23\% & -0.21\% & -0.20\% & -0.19\% & -0.18\% & -0.17\% \\
      Histogram Binning    & -0.08\% & -0.07\% & -0.06\% & -0.05\% & -0.04\% & -0.04\% & -0.04\% & -0.03\% & -0.03\% \\ Platt Scaling
          & -0.05\% & -0.05\% & -0.05\% & -0.04\% & -0.04\% & -0.04\% & -0.03\% & -0.03\% & -0.03\% \\ Scaling-Binning
          & -0.09\% & -0.07\% & -0.06\% & -0.06\% & -0.05\% & -0.04\% & -0.04\% & -0.04\% & -0.04\% \\ Isotonic Regression
          & -0.05\% & -0.05\% & -0.04\% & -0.04\% & -0.04\% & -0.03\% & -0.03\% & -0.03\% & -0.03\% \\
          \bottomrule
    \end{tabular}%
  \label{appendix_table:log_loss_nn}%
\end{table}%

\FloatBarrier

We further check the performance using different bin numbers $M \in \{30,40,60,70 \}$ in Tables \ref{appendix_table:ece_nnM3040} to \ref{appendix_table:mce_nn6070}. We observe  that regardless of the number of bins we choose, our method always outperforms.

\begin{table}[!ht]
  \centering
  \caption{Average and standard errors of ECE on Criteo Ad Kaggle dataset ($M=30,40$)}
      \begin{tabular}{llcccc}
      \toprule
          &       &  Original (M=30)  &  VAD+ (M=30)  &  Original (M=40)  &  VAD+ (M=40)  \\
          \midrule
     \multirow{5}[0]{*}{3\%}   &  Vanilla  & 0.0279$\pm$0.0027  & 0.0204$\pm$0.0019  & 0.0285$\pm$0.0026  & 0.0211$\pm$0.0018  \\
          &  Histogram Binning  & 0.0174$\pm$0.0003  & 0.0139$\pm$0.0003  & 0.0181$\pm$0.0003  & 0.0149$\pm$0.0003  \\
          &  Platt Scaling  & 0.0152$\pm$0.0006  & 0.0122$\pm$0.0004  & 0.0159$\pm$0.0005  & 0.0132$\pm$0.0004  \\
          &  Scaling-Binning  & 0.0179$\pm$0.0006  & 0.0143$\pm$0.0005  & 0.0185$\pm$0.0006  & 0.0152$\pm$0.0005  \\
          &  Isotonic Regression  & 0.0161$\pm$0.0003  & 0.0133$\pm$0.0003  & 0.0171$\pm$0.0003  & 0.0145$\pm$0.0003  \\
          \midrule
    \multirow{5}[0]{*}{5\%} &  Vanilla  & 0.0283$\pm$0.0029  & 0.0203$\pm$0.0021  & 0.0285$\pm$0.0029  & 0.0209$\pm$0.0020  \\
          &  Histogram Binning  & 0.0150$\pm$0.0003  & 0.0115$\pm$0.0003  & 0.0158$\pm$0.0003  & 0.0126$\pm$0.0002  \\
          &  Platt Scaling  & 0.0138$\pm$0.0005  & 0.0103$\pm$0.0003  & 0.0144$\pm$0.0005  & 0.0112$\pm$0.0003  \\
          &  Scaling-Binning  & 0.0156$\pm$0.0006  & 0.0117$\pm$0.0004  & 0.0163$\pm$0.0005  & 0.0127$\pm$0.0004  \\
          &  Isotonic Regression  & 0.0143$\pm$0.0003  & 0.0114$\pm$0.0003  & 0.0149$\pm$0.0003  & 0.0122$\pm$0.0002  \\
          \midrule
     \multirow{5}[0]{*}{7\%} &  Vanilla  & 0.0276$\pm$0.0030  & 0.0203$\pm$0.0022  & 0.0277$\pm$0.0030  & 0.0207$\pm$0.0021  \\
          &  Histogram Binning  & 0.0136$\pm$0.0002  & 0.0103$\pm$0.0002  & 0.0140$\pm$0.0002  & 0.0111$\pm$0.0002  \\
          &  Platt Scaling  & 0.0127$\pm$0.0005  & 0.0092$\pm$0.0003  & 0.0132$\pm$0.0004  & 0.0100$\pm$0.0003  \\
          &  Scaling-Binning  & 0.0142$\pm$0.0004  & 0.0104$\pm$0.0003  & 0.0145$\pm$0.0004  & 0.0110$\pm$0.0003  \\
          &  Isotonic Regression  & 0.0128$\pm$0.0002  & 0.0101$\pm$0.0002  & 0.0132$\pm$0.0002  & 0.0106$\pm$0.0002  \\
          \midrule
    \multirow{5}[0]{*}{9\%}&  Vanilla  & 0.0269$\pm$0.0030  & 0.0202$\pm$0.0022  & 0.0272$\pm$0.0030  & 0.0205$\pm$0.0022  \\
          &  Histogram Binning  & 0.0126$\pm$0.0002  & 0.0094$\pm$0.0002  & 0.0129$\pm$0.0002  & 0.0099$\pm$0.0002  \\
          &  Platt Scaling  & 0.0117$\pm$0.0004  & 0.0083$\pm$0.0003  & 0.0121$\pm$0.0004  & 0.0090$\pm$0.0002  \\
          &  Scaling-Binning  & 0.0128$\pm$0.0004  & 0.0092$\pm$0.0003  & 0.0131$\pm$0.0004  & 0.0096$\pm$0.0003  \\
          &  Isotonic Regression  & 0.0119$\pm$0.0002  & 0.0090$\pm$0.0002  & 0.0124$\pm$0.0002  & 0.0098$\pm$0.0002  \\
          \bottomrule
    \end{tabular}%
  \label{appendix_table:ece_nnM3040}%
\end{table}%

\begin{table}[!ht]
  \centering
  \caption{Average and standard errors of ECE on Criteo Ad Kaggle dataset ($M=60,70$)}
 
    \begin{tabular}{llcccc}
    \toprule
          &       &  Original (M=60)  &  VAD+ (M=60)  &  Original (M=70)  &  VAD+ (M=70)    \\
          \midrule
     \multirow{5}[0]{*}{3\%} &  Vanilla  & 0.0293$\pm$0.0026  & 0.0226$\pm$0.0017  & 0.0300$\pm$0.0025  & 0.0232$\pm$0.0017    \\
          &  Histogram Binning  & 0.0196$\pm$0.0003  & 0.0168$\pm$0.0003  & 0.0204$\pm$0.0003  & 0.0177$\pm$0.0003    \\
          &  Platt Scaling  & 0.0177$\pm$0.0005  & 0.0154$\pm$0.0003  & 0.0185$\pm$0.0005  & 0.0163$\pm$0.0003    \\
          &  Scaling-Binning  & 0.0199$\pm$0.0006  & 0.0169$\pm$0.0004  & 0.0206$\pm$0.0005  & 0.0177$\pm$0.0004    \\
          &  Isotonic Regression  & 0.0188$\pm$0.0003  & 0.0166$\pm$0.0003  & 0.0195$\pm$0.0003  & 0.0173$\pm$0.0003    \\
          \midrule
    \multirow{5}[0]{*}{5\%} &  Vanilla  & 0.0292$\pm$0.0028  & 0.0219$\pm$0.0019  & 0.0295$\pm$0.0028  & 0.0224$\pm$0.0019    \\
          &  Histogram Binning  & 0.0169$\pm$0.0003  & 0.0140$\pm$0.0002  & 0.0174$\pm$0.0003  & 0.0147$\pm$0.0002    \\
          &  Platt Scaling  & 0.0158$\pm$0.0004  & 0.0131$\pm$0.0003  & 0.0163$\pm$0.0004  & 0.0138$\pm$0.0003    \\
          &  Scaling-Binning  & 0.0172$\pm$0.0005  & 0.0140$\pm$0.0004  & 0.0176$\pm$0.0005  & 0.0146$\pm$0.0003    \\
          &  Isotonic Regression  & 0.0163$\pm$0.0002  & 0.0138$\pm$0.0002  & 0.0171$\pm$0.0002  & 0.0149$\pm$0.0002   \\
          \midrule
     \multirow{5}[0]{*}{7\%}&  Vanilla  & 0.0283$\pm$0.0029  & 0.0214$\pm$0.0020  & 0.0286$\pm$0.0029  & 0.0218$\pm$0.0020   \\
          &  Histogram Binning  & 0.0151$\pm$0.0002  & 0.0124$\pm$0.0002  & 0.0157$\pm$0.0002  & 0.0131$\pm$0.0002    \\
          &  Platt Scaling  & 0.0141$\pm$0.0004  & 0.0112$\pm$0.0002  & 0.0147$\pm$0.0004  & 0.0120$\pm$0.0002    \\
          &  Scaling-Binning  & 0.0155$\pm$0.0004  & 0.0125$\pm$0.0002  & 0.0159$\pm$0.0004  & 0.0130$\pm$0.0003    \\
          &  Isotonic Regression  & 0.0146$\pm$0.0002  & 0.0123$\pm$0.0002  & 0.0149$\pm$0.0002  & 0.0127$\pm$0.0002    \\
          \midrule
   \multirow{5}[0]{*}{9\%}&  Vanilla  & 0.0276$\pm$0.0029  & 0.0212$\pm$0.0021  & 0.0277$\pm$0.0029  & 0.0213$\pm$0.0021    \\
          &  Histogram Binning  & 0.0139$\pm$0.0002  & 0.0113$\pm$0.0002  & 0.0142$\pm$0.0002  & 0.0116$\pm$0.0001    \\
          &  Platt Scaling  & 0.0129$\pm$0.0003  & 0.0102$\pm$0.0002  & 0.0133$\pm$0.0003  & 0.0106$\pm$0.0002    \\
          &  Scaling-Binning  & 0.0139$\pm$0.0004  & 0.0109$\pm$0.0002  & 0.0143$\pm$0.0003  & 0.0114$\pm$0.0002   \\
          &  Isotonic Regression  & 0.0134$\pm$0.0002  & 0.0110$\pm$0.0002  & 0.0138$\pm$0.0002  & 0.0115$\pm$0.0002    \\
          \bottomrule
    \end{tabular}%
 
  \label{appendix_table:ece_nnM6070}%
\end{table}%

% Table generated by Excel2LaTeX from sheet 'table_MCE_nn'
\begin{table}[h]
  \centering
  \caption{Average and standard errors of MCE on Criteo Ad Kaggle dataset ($M=30,40$)}
       \begin{tabular}{llcccc}
       \toprule
          &       &  Original (M=30)  &  VAD+ (M=30)  &  Original (M=40)  &  VAD+ (M=40)  \\
          \midrule
     \multirow{5}[0]{*}{3\%} &  Vanilla  & 0.0542$\pm$0.0035  & 0.0442$\pm$0.0026  & 0.0589$\pm$0.0033  & 0.0494$\pm$0.0024  \\
          &  Histogram Binning  & 0.0434$\pm$0.0010  & 0.0380$\pm$0.0010  & 0.0482$\pm$0.0011  & 0.0427$\pm$0.0011  \\
          &  Platt Scaling  & 0.0398$\pm$0.0011  & 0.0341$\pm$0.0011  & 0.0445$\pm$0.0013  & 0.0391$\pm$0.0012  \\
          &  Scaling-Binning  & 0.0415$\pm$0.0013  & 0.0364$\pm$0.0013  & 0.0483$\pm$0.0014  & 0.0431$\pm$0.0014  \\
          &  Isotonic Regression  & 0.0465$\pm$0.0014  & 0.0406$\pm$0.0013  & 0.0531$\pm$0.0015  & 0.0473$\pm$0.0014  \\
          \midrule
    \multirow{5}[0]{*}{5\%} &  Vanilla  & 0.0485$\pm$0.0035  & 0.0396$\pm$0.0025  & 0.0538$\pm$0.0037  & 0.0442$\pm$0.0024  \\
          &  Histogram Binning  & 0.0379$\pm$0.0010  & 0.0326$\pm$0.0010  & 0.0434$\pm$0.0010  & 0.0382$\pm$0.0010  \\
          &  Platt Scaling  & 0.0343$\pm$0.0009  & 0.0286$\pm$0.0008  & 0.0390$\pm$0.0010  & 0.0337$\pm$0.0009  \\
          &  Scaling-Binning  & 0.0369$\pm$0.0011  & 0.0317$\pm$0.0011  & 0.0403$\pm$0.0013  & 0.0351$\pm$0.0013  \\
          &  Isotonic Regression  & 0.0389$\pm$0.0012  & 0.0336$\pm$0.0011  & 0.0424$\pm$0.0012  & 0.0368$\pm$0.0011  \\
          \midrule
     \multirow{5}[0]{*}{7\%}&  Vanilla  & 0.0456$\pm$0.0033  & 0.0375$\pm$0.0025  & 0.0492$\pm$0.0034  & 0.0406$\pm$0.0024  \\
          &  Histogram Binning  & 0.0341$\pm$0.0007  & 0.0287$\pm$0.0007  & 0.0364$\pm$0.0008  & 0.0311$\pm$0.0008  \\
          &  Platt Scaling  & 0.0302$\pm$0.0009  & 0.0248$\pm$0.0008  & 0.0340$\pm$0.0009  & 0.0285$\pm$0.0009  \\
          &  Scaling-Binning  & 0.0340$\pm$0.0009  & 0.0290$\pm$0.0009  & 0.0367$\pm$0.0009  & 0.0316$\pm$0.0009  \\
          &  Isotonic Regression  & 0.0349$\pm$0.0009  & 0.0293$\pm$0.0010  & 0.0395$\pm$0.0009  & 0.0337$\pm$0.0009  \\
          \midrule
   \multirow{5}[0]{*}{9\%}&  Vanilla  & 0.0434$\pm$0.0031  & 0.0353$\pm$0.0024  & 0.0468$\pm$0.0033  & 0.0394$\pm$0.0025  \\
          &  Histogram Binning  & 0.0307$\pm$0.0006  & 0.0255$\pm$0.0006  & 0.0346$\pm$0.0008  & 0.0293$\pm$0.0008  \\
          &  Platt Scaling  & 0.0273$\pm$0.0007  & 0.0219$\pm$0.0007  & 0.0308$\pm$0.0008  & 0.0255$\pm$0.0007  \\
          &  Scaling-Binning  & 0.0316$\pm$0.0010  & 0.0267$\pm$0.0010  & 0.0338$\pm$0.0010  & 0.0288$\pm$0.0010  \\
          &  Isotonic Regression  & 0.0331$\pm$0.0009  & 0.0274$\pm$0.0009  & 0.0358$\pm$0.0008  & 0.0303$\pm$0.0008  \\
          \bottomrule
    \end{tabular}%
  \label{appendix_table:mce_nn3040}%
\end{table}%

\begin{table}[h]
  \centering
  \caption{Average and standard errors of MCE on Criteo Ad Kaggle dataset ($M=60,70$)}
           \begin{tabular}{llcccc}
           \toprule
          &       &  Original (M=60)  &  VAD+ (M=60)  &  Original (M=70)  &  VAD+ (M=70)  \\
          \midrule
  \multirow{5}[0]{*}{3\%} &  Vanilla  & 0.0695$\pm$0.0034  & 0.0595$\pm$0.0027  & 0.0752$\pm$0.0035  & 0.0644$\pm$0.0028  \\
          &  Histogram Binning  & 0.0568$\pm$0.0012  & 0.0513$\pm$0.0012  & 0.0610$\pm$0.0012  & 0.0560$\pm$0.0012  \\
          &  Platt Scaling  & 0.0548$\pm$0.0012  & 0.0494$\pm$0.0011  & 0.0600$\pm$0.0012  & 0.0549$\pm$0.0011  \\
          &  Scaling-Binning  & 0.0550$\pm$0.0012  & 0.0500$\pm$0.0011  & 0.0598$\pm$0.0015  & 0.0552$\pm$0.0014  \\
          &  Isotonic Regression  & 0.0606$\pm$0.0014  & 0.0552$\pm$0.0013  & 0.0658$\pm$0.0016  & 0.0611$\pm$0.0015  \\
           \midrule
     \multirow{5}[0]{*}{5\%} &  Vanilla  & 0.0631$\pm$0.0035  & 0.0540$\pm$0.0028  & 0.0634$\pm$0.0033  & 0.0543$\pm$0.0024  \\
          &  Histogram Binning  & 0.0508$\pm$0.0011  & 0.0455$\pm$0.0011  & 0.0540$\pm$0.0011  & 0.0485$\pm$0.0010  \\
          &  Platt Scaling  & 0.0473$\pm$0.0010  & 0.0419$\pm$0.0010  & 0.0489$\pm$0.0010  & 0.0436$\pm$0.0010  \\
          &  Scaling-Binning  & 0.0484$\pm$0.0014  & 0.0433$\pm$0.0013  & 0.0511$\pm$0.0014  & 0.0461$\pm$0.0013  \\
          &  Isotonic Regression  & 0.0492$\pm$0.0015  & 0.0440$\pm$0.0014  & 0.0533$\pm$0.0015  & 0.0479$\pm$0.0014  \\
           \midrule
   \multirow{5}[0]{*}{7\%} &  Vanilla  & 0.0559$\pm$0.0032  & 0.0478$\pm$0.0024  & 0.0605$\pm$0.0032  & 0.0517$\pm$0.0026  \\
          &  Histogram Binning  & 0.0439$\pm$0.0010  & 0.0387$\pm$0.0009  & 0.0458$\pm$0.0009  & 0.0407$\pm$0.0010  \\
          &  Platt Scaling  & 0.0404$\pm$0.0011  & 0.0349$\pm$0.0010  & 0.0446$\pm$0.0008  & 0.0395$\pm$0.0007  \\
          &  Scaling-Binning  & 0.0442$\pm$0.0012  & 0.0393$\pm$0.0011  & 0.0462$\pm$0.0012  & 0.0411$\pm$0.0012  \\
          &  Isotonic Regression  & 0.0454$\pm$0.0010  & 0.0399$\pm$0.0010  & 0.0482$\pm$0.0011  & 0.0427$\pm$0.0011  \\
           \midrule
     \multirow{5}[0]{*}{9\%} &  Vanilla  & 0.0533$\pm$0.0032  & 0.0447$\pm$0.0023  & 0.0553$\pm$0.0032  & 0.0478$\pm$0.0026  \\
          &  Histogram Binning  & 0.0398$\pm$0.0009  & 0.0345$\pm$0.0009  & 0.0424$\pm$0.0010  & 0.0372$\pm$0.0010  \\
          &  Platt Scaling  & 0.0372$\pm$0.0008  & 0.0318$\pm$0.0007  & 0.0397$\pm$0.0010  & 0.0346$\pm$0.0009  \\
          &  Scaling-Binning  & 0.0401$\pm$0.0010  & 0.0349$\pm$0.0009  & 0.0425$\pm$0.0010  & 0.0373$\pm$0.0009  \\
          &  Isotonic Regression  & 0.0416$\pm$0.0010  & 0.0364$\pm$0.0010  & 0.0452$\pm$0.0012  & 0.0399$\pm$0.0012  \\
          \bottomrule
    \end{tabular}%
  \label{appendix_table:mce_nn6070}%
\end{table}%

\FloatBarrier

We then check the performance using different $S \in \{4,5,6\}$ in Tables \ref{appendix_table:ce_nnS} to \ref{appendix_table:mce_nnS}.  We find that different $S$s have similar performance, except that for the Vanilla method,  VAD+Vanilla with $S=6$ is indeed better.

\begin{table}[htbp]
  \centering
  \caption{Average and standard errors of calibration errors on Criteo Ad Kaggle dataset ($S=4,5,6$)}
  
       \begin{tabular}{llcccc}
       \toprule
          &       &  Original  &  VAD+ (S=4) &  VAD+ (S=5) & VAD+ (S=6)  \\
          \midrule
 \multirow{5}[0]{*}{3\%}&  Vanilla  & 3.53\%$\pm$0.45\%  & -0.88\%$\pm$0.70\%  & -0.20\%$\pm$0.76\%  & 0.40\%$\pm$0.70\%  \\
          &  Histogram Binning  & 2.19\%$\pm$0.06\%  & 1.42\%$\pm$0.08\%  & 1.45\%$\pm$0.09\%  & 1.46\%$\pm$0.10\%  \\
          &  Platt Scaling  & 1.77\%$\pm$0.12\%  & 1.12\%$\pm$0.15\%  & 1.17\%$\pm$0.17\%  & 1.12\%$\pm$0.18\%  \\
          &  Scaling-Binning  & 2.27\%$\pm$0.12\%  & 1.62\%$\pm$0.15\%  & 1.68\%$\pm$0.19\%  & 1.61\%$\pm$0.19\%  \\
          &  Isotonic Regression  & 1.86\%$\pm$0.05\%  & 1.08\%$\pm$0.06\%  & 1.08\%$\pm$0.07\%  & 1.12\%$\pm$0.08\%  \\
      \midrule   \multirow{5}[0]{*}{5\%} &  Vanilla  & 3.82\%$\pm$0.52\%  & -0.96\%$\pm$0.77\%  & -0.21\%$\pm$0.85\%  & 0.39\%$\pm$0.79\%  \\
          &  Histogram Binning  & 2.03\%$\pm$0.06\%  & 1.17\%$\pm$0.06\%  & 1.21\%$\pm$0.07\%  & 1.15\%$\pm$0.07\%  \\
          &  Platt Scaling  & 1.88\%$\pm$0.10\%  & 1.16\%$\pm$0.13\%  & 1.24\%$\pm$0.15\%  & 1.16\%$\pm$0.15\%  \\
          &  Scaling-Binning  & 2.22\%$\pm$0.10\%  & 1.49\%$\pm$0.13\%  & 1.57\%$\pm$0.15\%  & 1.47\%$\pm$0.16\%  \\
          &  Isotonic Regression  & 1.79\%$\pm$0.05\%  & 0.91\%$\pm$0.06\%  & 0.95\%$\pm$0.07\%  & 0.89\%$\pm$0.07\%  \\
     \midrule    \multirow{5}[0]{*}{7\%} &  Vanilla  & 3.90\%$\pm$0.57\%  & -1.13\%$\pm$0.82\%  & -0.37\%$\pm$0.91\%  & 0.27\%$\pm$0.85\%  \\
          &  Histogram Binning  & 1.92\%$\pm$0.03\%  & 0.99\%$\pm$0.05\%  & 1.01\%$\pm$0.06\%  & 0.98\%$\pm$0.06\%  \\
          &  Platt Scaling  & 1.86\%$\pm$0.08\%  & 1.04\%$\pm$0.11\%  & 1.11\%$\pm$0.12\%  & 1.06\%$\pm$0.13\%  \\
          &  Scaling-Binning  & 2.13\%$\pm$0.09\%  & 1.31\%$\pm$0.11\%  & 1.38\%$\pm$0.12\%  & 1.32\%$\pm$0.13\%  \\
          &  Isotonic Regression  & 1.72\%$\pm$0.04\%  & 0.77\%$\pm$0.06\%  & 0.79\%$\pm$0.07\%  & 0.77\%$\pm$0.07\%  \\
     \midrule  \multirow{5}[0]{*}{9\%} &  Vanilla  & 3.93\%$\pm$0.61\%  & -1.26\%$\pm$0.85\%  & -0.48\%$\pm$0.96\%  & 0.17\%$\pm$0.91\%  \\
          &  Histogram Binning  & 1.89\%$\pm$0.03\%  & 0.91\%$\pm$0.05\%  & 0.95\%$\pm$0.06\%  & 0.96\%$\pm$0.05\%  \\
          &  Platt Scaling  & 1.82\%$\pm$0.08\%  & 0.94\%$\pm$0.10\%  & 1.01\%$\pm$0.11\%  & 0.97\%$\pm$0.12\%  \\
          &  Scaling-Binning  & 2.04\%$\pm$0.08\%  & 1.17\%$\pm$0.10\%  & 1.24\%$\pm$0.12\%  & 1.20\%$\pm$0.12\%  \\
          &  Isotonic Regression  & 1.74\%$\pm$0.03\%  & 0.75\%$\pm$0.05\%  & 0.80\%$\pm$0.06\%  & 0.79\%$\pm$0.05\%  \\
          \bottomrule
    \end{tabular}%
  \label{appendix_table:ce_nnS}%
\end{table}%

\begin{table}[!ht]
  \centering
  \caption{Average and standard errors of ECE on Criteo Ad Kaggle dataset ($S=4,5,6$)}
     
    \begin{tabular}{llcccc}
    \toprule
          &       &  Original  &  VAD+ (S=4) &  VAD+ (S=5) & VAD+ (S=6)  \\
   \midrule   \multirow{5}[0]{*}{3\%} &  Vanilla  & 0.0290$\pm$0.0026  & 0.0224$\pm$0.0025  & 0.0214$\pm$0.0027  & 0.0193$\pm$0.0021  \\
          &  Histogram Binning  & 0.0190$\pm$0.0003  & 0.0161$\pm$0.0004  & 0.0162$\pm$0.0005  & 0.0161$\pm$0.0005  \\
          &  Platt Scaling  & 0.0169$\pm$0.0005  & 0.0147$\pm$0.0005  & 0.0151$\pm$0.0006  & 0.0148$\pm$0.0005  \\
          &  Scaling-Binning  & 0.0192$\pm$0.0006  & 0.0165$\pm$0.0007  & 0.0165$\pm$0.0008  & 0.0159$\pm$0.0008  \\
          &  Isotonic Regression  & 0.0181$\pm$0.0003  & 0.0152$\pm$0.0003  & 0.0153$\pm$0.0004  & 0.0151$\pm$0.0004  \\
  \midrule   \multirow{5}[0]{*}{5\%} &  Vanilla  & 0.0287$\pm$0.0028  & 0.0213$\pm$0.0028  & 0.0203$\pm$0.0031  & 0.0184$\pm$0.0025  \\
          &  Histogram Binning  & 0.0163$\pm$0.0003  & 0.0133$\pm$0.0003  & 0.0132$\pm$0.0003  & 0.0130$\pm$0.0004  \\
          &  Platt Scaling  & 0.0151$\pm$0.0005  & 0.0120$\pm$0.0005  & 0.0124$\pm$0.0005  & 0.0119$\pm$0.0006  \\
          &  Scaling-Binning  & 0.0166$\pm$0.0005  & 0.0137$\pm$0.0006  & 0.0138$\pm$0.0007  & 0.0132$\pm$0.0008  \\
          &  Isotonic Regression  & 0.0156$\pm$0.0003  & 0.0130$\pm$0.0004  & 0.0131$\pm$0.0004  & 0.0130$\pm$0.0005  \\
    \midrule    \multirow{5}[0]{*}{7\%} &  Vanilla  & 0.0281$\pm$0.0029  & 0.0213$\pm$0.0028  & 0.0202$\pm$0.0032  & 0.0181$\pm$0.0026  \\
          &  Histogram Binning  & 0.0146$\pm$0.0002  & 0.0117$\pm$0.0003  & 0.0114$\pm$0.0003  & 0.0113$\pm$0.0003  \\
          &  Platt Scaling  & 0.0137$\pm$0.0004  & 0.0108$\pm$0.0004  & 0.0112$\pm$0.0005  & 0.0108$\pm$0.0005  \\
          &  Scaling-Binning  & 0.0149$\pm$0.0004  & 0.0120$\pm$0.0004  & 0.0121$\pm$0.0005  & 0.0118$\pm$0.0005  \\
          &  Isotonic Regression  & 0.0138$\pm$0.0002  & 0.0113$\pm$0.0003  & 0.0112$\pm$0.0004  & 0.0110$\pm$0.0004  \\
   \midrule   \multirow{5}[0]{*}{9\%}&  Vanilla  & 0.0274$\pm$0.0029  & 0.0208$\pm$0.0029  & 0.0198$\pm$0.0033  & 0.0177$\pm$0.0027  \\
          &  Histogram Binning  & 0.0135$\pm$0.0002  & 0.0105$\pm$0.0002  & 0.0105$\pm$0.0003  & 0.0103$\pm$0.0003  \\
          &  Platt Scaling  & 0.0125$\pm$0.0004  & 0.0095$\pm$0.0004  & 0.0099$\pm$0.0004  & 0.0094$\pm$0.0004  \\
          &  Scaling-Binning  & 0.0136$\pm$0.0004  & 0.0107$\pm$0.0004  & 0.0109$\pm$0.0004  & 0.0105$\pm$0.0004  \\
          &  Isotonic Regression  & 0.0129$\pm$0.0002  & 0.0105$\pm$0.0003  & 0.0103$\pm$0.0003  & 0.0101$\pm$0.0004  \\
          \bottomrule
    \end{tabular}%
  \label{appendix_table:ece_nnS}%
\end{table}%

% Table generated by Excel2LaTeX from sheet 'table_MCE_nn'
\begin{table}[h]
  \centering
  \caption{Average and standard errors of MCE on Criteo Ad Kaggle dataset ($S=4,5,6$)}
  
       \begin{tabular}{llcccc}
       \toprule
          &       &  Original  &  VAD+ (S=4) &  VAD+ (S=5) & VAD+ (S=6)  \\
 \midrule  \multirow{5}[0]{*}{3\%}&  Vanilla  & 0.0655$\pm$0.0037  & 0.0541$\pm$0.0039  & 0.0538$\pm$0.0045  & 0.0511$\pm$0.0037  \\
          &  Histogram Binning  & 0.0524$\pm$0.0011  & 0.0457$\pm$0.0012  & 0.0453$\pm$0.0015  & 0.0460$\pm$0.0017  \\
          &  Platt Scaling  & 0.0510$\pm$0.0015  & 0.0452$\pm$0.0019  & 0.0463$\pm$0.0021  & 0.0451$\pm$0.0019  \\
          &  Scaling-Binning  & 0.0513$\pm$0.0013  & 0.0472$\pm$0.0012  & 0.0469$\pm$0.0013  & 0.0460$\pm$0.0011  \\
          &  Isotonic Regression  & 0.0572$\pm$0.0016  & 0.0495$\pm$0.0022  & 0.0500$\pm$0.0027  & 0.0488$\pm$0.0027  \\
  \midrule  \multirow{5}[0]{*}{5\%}&  Vanilla  & 0.0575$\pm$0.0035  & 0.0455$\pm$0.0035  & 0.0442$\pm$0.0040  & 0.0413$\pm$0.0031  \\
          &  Histogram Binning  & 0.0475$\pm$0.0011  & 0.0417$\pm$0.0014  & 0.0420$\pm$0.0016  & 0.0419$\pm$0.0018  \\
          &  Platt Scaling  & 0.0424$\pm$0.0010  & 0.0360$\pm$0.0014  & 0.0363$\pm$0.0016  & 0.0352$\pm$0.0017  \\
          &  Scaling-Binning  & 0.0451$\pm$0.0013  & 0.0392$\pm$0.0017  & 0.0390$\pm$0.0020  & 0.0388$\pm$0.0024  \\
          &  Isotonic Regression  & 0.0474$\pm$0.0016  & 0.0403$\pm$0.0015  & 0.0412$\pm$0.0017  & 0.0407$\pm$0.0017  \\
  \midrule   \multirow{5}[0]{*}{7\%} &  Vanilla  & 0.0532$\pm$0.0035  & 0.0446$\pm$0.0034  & 0.0441$\pm$0.0039  & 0.0418$\pm$0.0034  \\
          &  Histogram Binning  & 0.0402$\pm$0.0009  & 0.0347$\pm$0.0014  & 0.0331$\pm$0.0014  & 0.0333$\pm$0.0016  \\
          &  Platt Scaling  & 0.0378$\pm$0.0010  & 0.0328$\pm$0.0015  & 0.0332$\pm$0.0018  & 0.0329$\pm$0.0022  \\
          &  Scaling-Binning  & 0.0405$\pm$0.0011  & 0.0370$\pm$0.0017  & 0.0357$\pm$0.0014  & 0.0349$\pm$0.0016  \\
          &  Isotonic Regression  & 0.0420$\pm$0.0012  & 0.0359$\pm$0.0013  & 0.0357$\pm$0.0015  & 0.0355$\pm$0.0018  \\
          \midrule
    \multirow{5}[0]{*}{9\%} &  Vanilla  & 0.0501$\pm$0.0032  & 0.0418$\pm$0.0031  & 0.0417$\pm$0.0037  & 0.0401$\pm$0.0033  \\
          &  Histogram Binning  & 0.0363$\pm$0.0008  & 0.0289$\pm$0.0008  & 0.0291$\pm$0.0010  & 0.0286$\pm$0.0011  \\
          &  Platt Scaling  & 0.0349$\pm$0.0008  & 0.0304$\pm$0.0011  & 0.0309$\pm$0.0011  & 0.0307$\pm$0.0013  \\
          &  Scaling-Binning  & 0.0376$\pm$0.0010  & 0.0338$\pm$0.0013  & 0.0336$\pm$0.0016  & 0.0319$\pm$0.0014  \\
          &  Isotonic Regression  & 0.0386$\pm$0.0010  & 0.0322$\pm$0.0012  & 0.0316$\pm$0.0014  & 0.0317$\pm$0.0016  \\
          \bottomrule
    \end{tabular}%
  \label{appendix_table:mce_nnS}%
\end{table}%

\FloatBarrier

Finally, we present histograms of the last-layer neuron (logit) in the neural networks to justify our Gaussianity assumptions in Theorem \ref{thm:main}. Figure \ref{fig:hist} plots the histograms of model predictions from models using three different random seeds, where the black line is the estimated Gaussian density. By the Kolmogorov-Smirnov test, we cannot reject the null hypothesis that the empirical distribution is Gaussian at the 5\% significance level.

\begin{figure}[h]
	\centering
	%%\captionsetup{belowskip=-10pt}
	\subfigure[]{
		 \includegraphics[width=4.5cm]{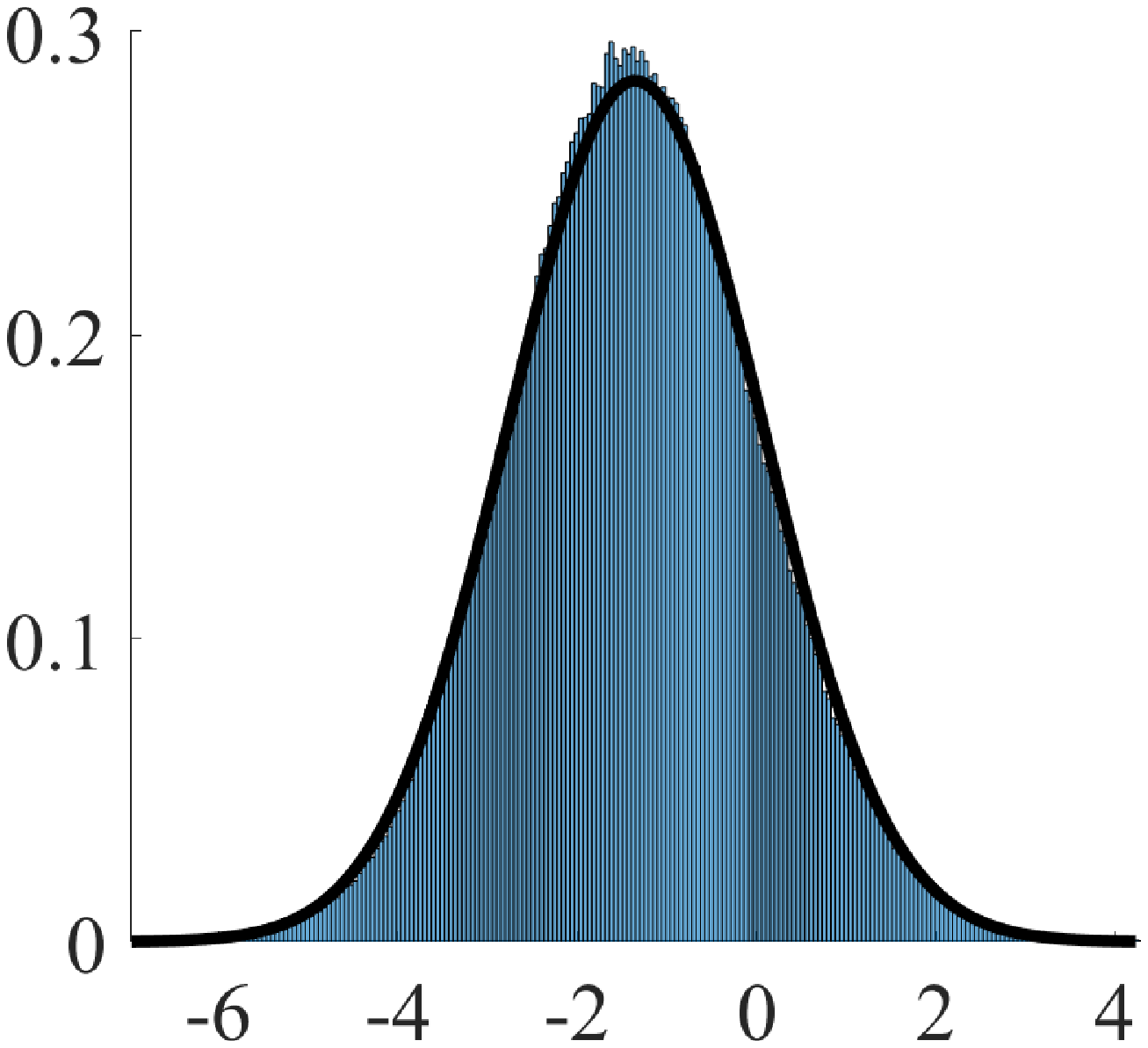}}
	\subfigure[]{
		 \includegraphics[width=4.5cm]{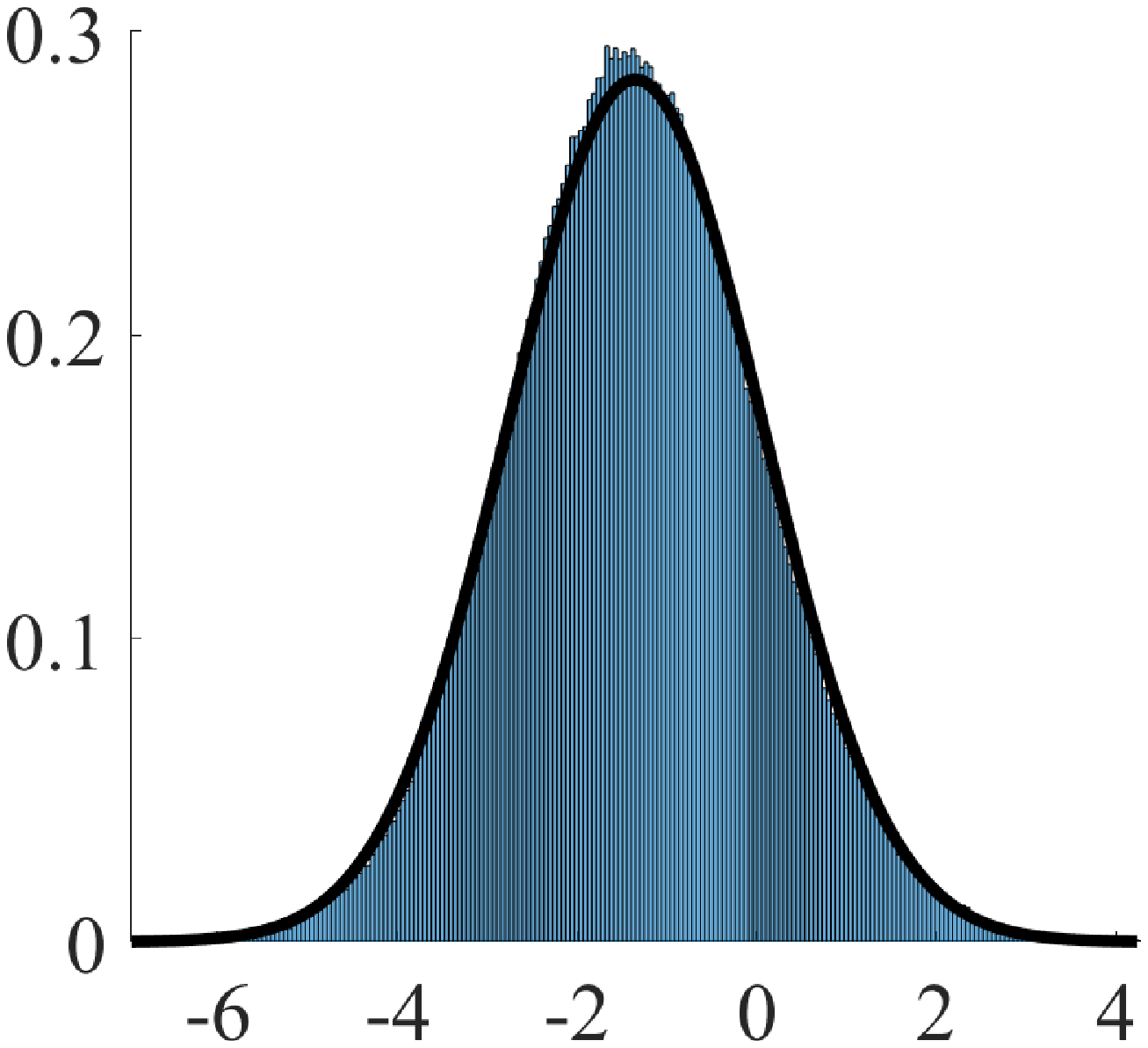}}
		\subfigure[]{ \includegraphics[width=4.5cm]{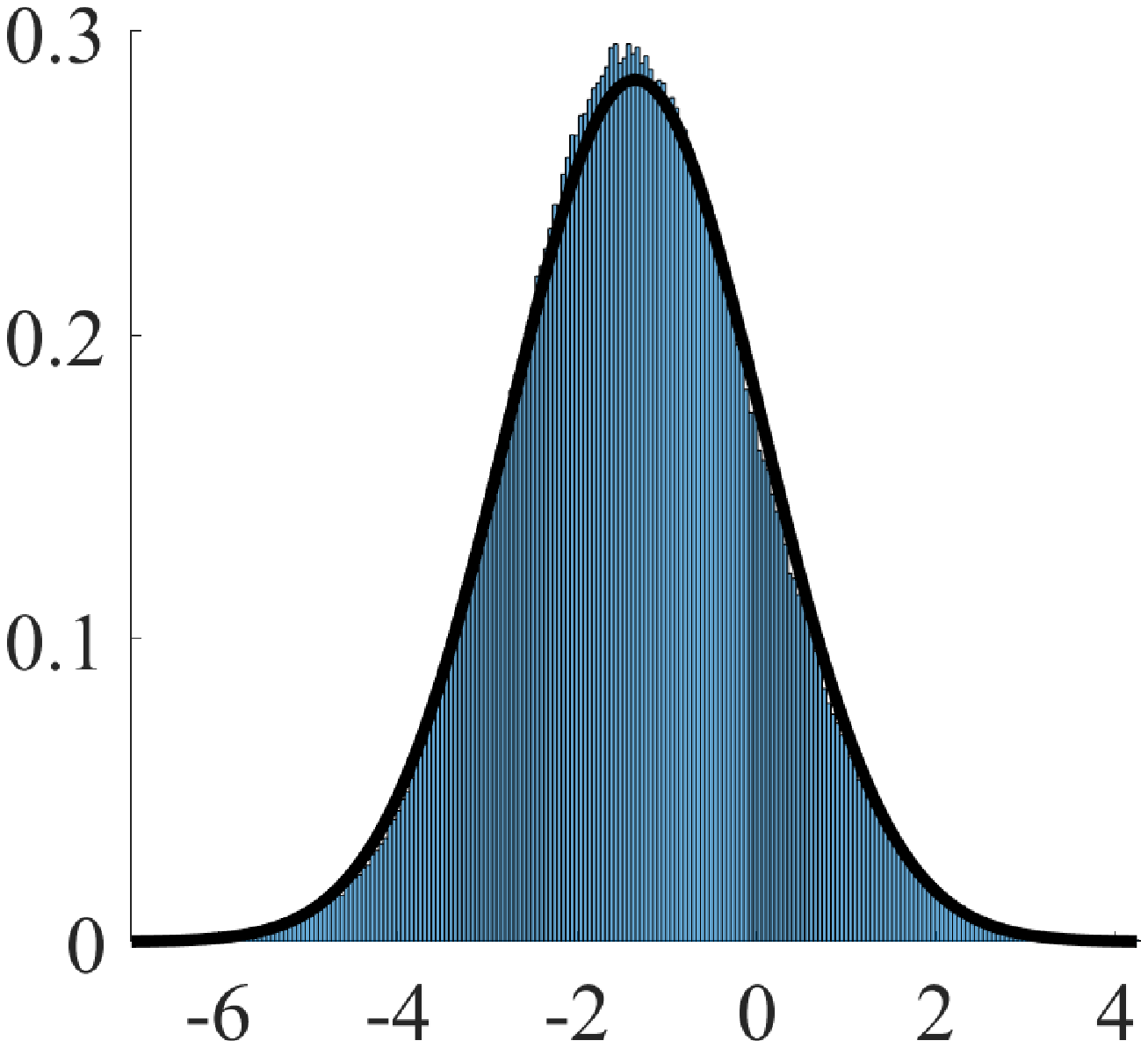}}
	%%\subfigure[$n=50$ true posterior distribution]{
	%%\label{exp:3} \includegraphics[width=5cm]{50trialsnormaldensity.eps}}
	\caption{Histograms of the last-layer neuron (logit) of the neural networks}
	\label{fig:hist}
\end{figure}

\FloatBarrier

\subsection{Avazu Dataset}
\label{appendix:avazu:data}

\textbf{Dataset} We use the Avazu CTR Prediction dataset \footnote{https://www.kaggle.com/c/avazu-ctr-prediction} to demonstrate our method's performance. The Avazu CTR Prediction dataset is a common benchmark dataset for CTR predictions. It consists 10 days of click-through data, ordered chronologically.  Due to computational constraints in our experiments, we use the first 10 million samples, shuffle the dataset randomly, and split the whole dataset into $77\%$ train $\mathcal{D}_\mathrm{train}$, $1.5\% $ validation-train $\mathcal{D}_\mathrm{val-train}$, $1.5\%$ validation-test $\mathcal{D}_\mathrm{val-test}$, and $20\%$ test $\mathcal{D}_\mathrm{test}$ datasets.

\textbf{Base Model} We use the xDeepFM model \citep{lian2018xdeepfm} open-sourced in \citep{shen2017deepctr}.

\textbf{Covariate Shift} We train another xDeepFM model. We randomly keep each data point in the original test set with probability $1 - p$ if $p < 0.2$, probability $0.1$ if $p > 0.3$ and probability $2.2 - 7 * p$ if $0.2 < p < 0.3$, where $p$ is the newly-trained xDeepFM model prediction for the data point. The training set remains the same. By doing this, we ensure that the distributional change is  only a covariate shift, and the positive sample ratio in the test data is lower than the positive sample ratio in the training data, which is consistent with the real-world recommendation systems.

We report results for the calibration error, ECE, and MCE in Tables \ref{tab:avazu:calibration}, \ref{tab:avazu:ECE}, and \ref{tab:avazu:MCE}, respectively. We observe that  all the calibration methods in tandem with VAD achieve better performance than the sole calibration methods alone, which is consistent with results in our paper.

% Table generated by Excel2LaTeX from sheet 'table_calibration_avazu'
\begin{table}[htbp]
  \centering
  \caption{Average and standard errors of calibration errors on the Avazu dataset}
    \begin{tabular}{llcc}
    \toprule
    \multicolumn{1}{l}{ } &       &  Original  &  VAD+  \\
      \midrule
    \multirow{4}[0]{*}{3\%} &  Histogram Binning  & 3.65\%$\pm$0.21\%  & 1.32\%$\pm$0.22\%  \\
  
          &  Platt Scaling  & 3.77\%$\pm$0.24\%  & 1.41\%$\pm$0.24\%  \\
          &  Scaling-Binning  & 3.78\%$\pm$0.24\%  & 1.42\%$\pm$0.24\%  \\
          &  Isotonic Regression  & 3.77\%$\pm$0.22\%  & 1.42\%$\pm$0.23\%  \\
          \midrule
    \multirow{4}[0]{*}{5\%} &  Histogram Binning  & 5.10\%$\pm$0.26\%  & 2.76\%$\pm$0.27\%  \\
          &  Platt Scaling  & 5.19\%$\pm$0.27\%  & 2.82\%$\pm$0.27\%  \\
          &  Scaling-Binning  & 5.27\%$\pm$0.27\%  & 2.91\%$\pm$0.27\%  \\
          &  Isotonic Regression  & 5.17\%$\pm$0.27\%  & 2.80\%$\pm$0.27\%  \\
          \midrule
    \multirow{4}[0]{*}{7\%} &  Histogram Binning  & 5.79\%$\pm$0.28\%  & 3.47\%$\pm$0.28\%  \\
          &  Platt Scaling  & 5.73\%$\pm$0.28\%  & 3.39\%$\pm$0.28\%  \\
          &  Scaling-Binning  & 5.76\%$\pm$0.28\%  & 3.42\%$\pm$0.28\%  \\
          &  Isotonic Regression  & 5.67\%$\pm$0.27\%  & 3.32\%$\pm$0.28\%  \\
          \midrule
    \multirow{4}[0]{*}{9\%} &  Histogram Binning  & 6.07\%$\pm$0.27\%  & 3.78\%$\pm$0.27\%  \\
          &  Platt Scaling  & 5.97\%$\pm$0.29\%  & 3.65\%$\pm$0.29\%  \\
          &  Scaling-Binning  & 5.98\%$\pm$0.29\%  & 3.66\%$\pm$0.29\%  \\
          &  Isotonic Regression  & 5.91\%$\pm$0.27\%  & 3.59\%$\pm$0.27\%  \\
          \bottomrule
    \end{tabular}%
  \label{tab:avazu:calibration}%
\end{table}%

\begin{table}[htbp]
  \centering
  \caption{Average and standard errors of ECE on the Avazu dataset}
    \begin{tabular}{llcc}
    \toprule
    \multicolumn{1}{l}{ } &       &  Original  &  VAD+  \\
    \midrule
    \multirow{4}[0]{*}{3\%} &  Histogram Binning  & 0.0209$\pm$0.0007  & 0.0186$\pm$0.0005  \\
          &  Platt Scaling  & 0.0218$\pm$0.0007  & 0.0190$\pm$0.0007  \\
          &  Scaling-Binning  & 0.0212$\pm$0.0006  & 0.0184$\pm$0.0006  \\
          &  Isotonic Regression  & 0.0215$\pm$0.0007  & 0.0189$\pm$0.0005  \\ \midrule
    \multirow{4}[0]{*}{5\%} &  Histogram Binning  & 0.0208$\pm$0.0007  & 0.0173$\pm$0.0005  \\
          &  Platt Scaling  & 0.0211$\pm$0.0008  & 0.0172$\pm$0.0007  \\
          &  Scaling-Binning  & 0.0213$\pm$0.0008  & 0.0173$\pm$0.0007  \\
          &  Isotonic Regression  & 0.0210$\pm$0.0007  & 0.0171$\pm$0.0005  \\ \midrule
    \multirow{4}[0]{*}{7\%} &  Histogram Binning  & 0.0202$\pm$0.0007  & 0.0163$\pm$0.0006  \\
          &  Platt Scaling  & 0.0201$\pm$0.0008  & 0.0157$\pm$0.0007  \\
          &  Scaling-Binning  & 0.0200$\pm$0.0008  & 0.0157$\pm$0.0007  \\
          &  Isotonic Regression  & 0.0199$\pm$0.0007  & 0.0157$\pm$0.0005  \\  \midrule
    \multirow{4}[0]{*}{9\%} &  Histogram Binning  & 0.0194$\pm$0.0006  & 0.0153$\pm$0.0005  \\
          &  Platt Scaling  & 0.0193$\pm$0.0008  & 0.0148$\pm$0.0007  \\
          &  Scaling-Binning  & 0.0191$\pm$0.0008  & 0.0147$\pm$0.0007  \\
          &  Isotonic Regression  & 0.0190$\pm$0.0006  & 0.0147$\pm$0.0005  \\
          \bottomrule
    \end{tabular}%
  \label{tab:avazu:ECE}%
\end{table}%

\begin{table}[htbp]
  \centering
  \caption{Average and standard errors of MCE on the Avazu dataset}
    \begin{tabular}{llcc}
    \toprule
    \multicolumn{1}{l}{ } &       &  Original  &  VAD+  \\
    \midrule
    \multirow{4}[0]{*}{3\%} &  Histogram Binning  & 0.0636$\pm$0.0022  & 0.0571$\pm$0.0022  \\
          &  Platt Scaling  & 0.0611$\pm$0.0017  & 0.0566$\pm$0.0018  \\
          &  Scaling-Binning  & 0.0611$\pm$0.0014  & 0.0554$\pm$0.0017  \\
          &  Isotonic Regression  & 0.0663$\pm$0.0021  & 0.0599$\pm$0.0021  \\ \midrule
    \multirow{4}[0]{*}{5\%} &  Histogram Binning  & 0.0544$\pm$0.0014  & 0.0486$\pm$0.0014  \\
          &  Platt Scaling  & 0.0524$\pm$0.0019  & 0.0471$\pm$0.0019  \\
          &  Scaling-Binning  & 0.0515$\pm$0.0013  & 0.0463$\pm$0.0015  \\
          &  Isotonic Regression  & 0.0527$\pm$0.0013  & 0.0483$\pm$0.0015  \\ \midrule
    \multirow{4}[0]{*}{7\%} &  Histogram Binning  & 0.0485$\pm$0.0015  & 0.0420$\pm$0.0013  \\
          &  Platt Scaling  & 0.0489$\pm$0.0016  & 0.0432$\pm$0.0016  \\
          &  Scaling-Binning  & 0.0457$\pm$0.0012  & 0.0409$\pm$0.0015  \\
          &  Isotonic Regression  & 0.0480$\pm$0.0014  & 0.0428$\pm$0.0016  \\ \midrule
    \multirow{4}[0]{*}{9\%} &  Histogram Binning  & 0.0469$\pm$0.0014  & 0.0401$\pm$0.0014  \\
          &  Platt Scaling  & 0.0425$\pm$0.0014  & 0.0381$\pm$0.0015  \\
          &  Scaling-Binning  & 0.0429$\pm$0.0015  & 0.0374$\pm$0.0015  \\
          &  Isotonic Regression  & 0.0454$\pm$0.0014  & 0.0389$\pm$0.0014  \\
          \bottomrule
    \end{tabular}%
  \label{tab:avazu:MCE}%
\end{table}
\end{document}